\documentclass[letterpaper]{article} % DO NOT CHANGE THIS
\usepackage[preprint]{aaai2027}
\usepackage[hyphens]{url}  % DO NOT CHANGE THIS
\usepackage{graphicx} % DO NOT CHANGE THIS
\usepackage{natbib}  % DO NOT CHANGE THIS AND DO NOT ADD ANY OPTIONS TO IT
\usepackage{caption} % DO NOT CHANGE THIS AND DO NOT ADD ANY OPTIONS TO IT
\usepackage{algorithm}
\usepackage{algorithmic}
\usepackage[switch]{lineno}
\usepackage{amsmath}   % align, equation, \sim, \tfrac, etc.
\usepackage{amssymb}   % \mathbb, \Pr-style symbols
\usepackage{amsthm}    % theorem/definition/proof environments

\newcommand{\xmark}{\ensuremath{\boldsymbol{\times}}}

\usepackage{booktabs}   % \toprule, \midrule, \bottomrule
\usepackage{multirow}   % \multirow

\theoremstyle{plain}
\newtheorem{theorem}{Theorem}
\newtheorem{lemma}{Lemma}
\newtheorem{proposition}{Proposition}

\theoremstyle{definition}

\newtheorem{definition}{Definition}
\theoremstyle{remark}

\title{%
\raisebox{-0.35\height}{\includegraphics[height=1.8em]{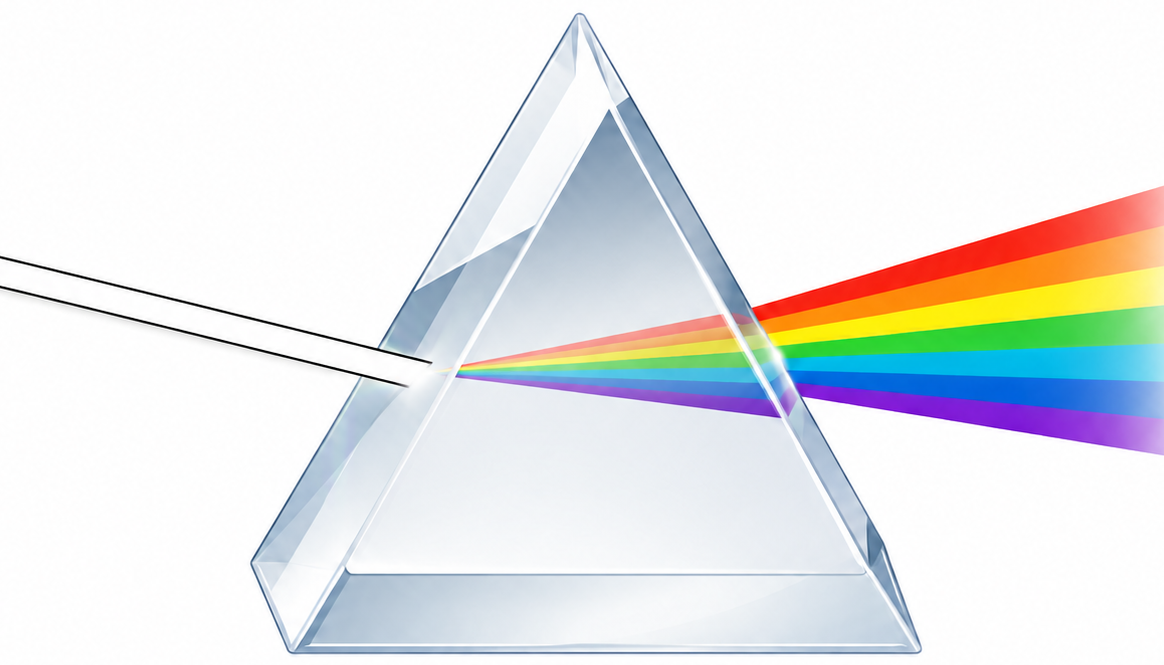}}%
Prism-GRPO: Faster VLA Policy Optimization \\
via Splitting Same-outcome Groups%
}

\author{
    Zeyun Deng$^{1}$\thanks{Work was done during an internship at AWS AI.},
    Yuzhe Lu$^{2}$,
    Yawei Wang$^{2}$, 
    Linbo Liu$^{2}$, \\
    Qing Ping$^{2}$,
    Han Ding$^{2}$,
    Guande Wu$^{2}$,
    Panpan Xu$^{2}$,
    Jun Huan$^{2}$
}
\affiliations{
    $^{1}$Purdue University 
    $^{2}$AWS AI
}

\begin{document}

\maketitle

\begin{abstract}
% zeyun: this is your original version since i don't quite like my first editing attempt below. will think more

% Reinforcement learning improves vision-language-action (VLA) policies beyond supervised
% fine-tuning (SFT), and GRPO has become a popular choice for this stage: it scores rollouts
% relative to others in the same group. With a binary success reward, however, groups whose
% rollouts all succeed or all fail have no advantage spread, produce no gradient, and are
% discarded. Such groups are common, dominating early training when the policy fails almost
% every scene, so much of an expensive
% rollout budget is thrown away. 
% We propose Prism-GRPO, which augments the
% binary outcome with a weighted continuous execution-quality term. This splits same-outcome
% groups into finer reward levels while preserving success dominance: every success still
% outranks every failure.
% Unlike task-specific progress rewards, Prism-GRPO only requires a trajectory-level quality score,
% making the framework reusable across tasks and quality notions. 
% We obtain such scores from
% simulator data, executed action trajectories, or vision-language models.
%  When aligned with successful execution, quality improves task success as well as behavior. 
% We provide theoretical guarantee that Prism-GRPO wastes fewer rollouts and characterize when it preserves progress on
% success.
% On RoboTwin, it reaches target success rates with up to 56\% fewer rollouts and improves
% execution quality; in real-world tests, it avoids a simulator shortcut that often fails on
% hardware while maintaining clean success.

GRPO is increasingly used for reinforcement learning of vision-language-action (VLA) policies because, unlike PPO, it does not require training a critic. This simplification comes with a sampling cost: group-relative advantages require multiple rollouts from each scene. Under binary success rewards, groups whose rollouts all succeed or all fail have zero advantage and are discarded by dynamic sampling. These groups are especially common early in training, when most rollouts fail, wasting much of the expensive robotic rollout budget. We introduce Prism-GRPO, which augments binary outcome reward with a weighted trajectory-level execution-quality score. By splitting same-outcome groups into a quality spectrum, Prism-GRPO recovers training signal while ensuring that every success still outranks every failure. Quality scores can be derived from simulator contacts, executed actions, or visual observations, avoiding task-specific progress rewards. We prove that Prism-GRPO never increases the probability that a sampled group is discarded for having zero advantages, and derive a gradient-alignment condition under which its combined update remains a local ascent direction for task success. Across four RoboTwin tasks spanning different horizons and coordination patterns, Prism-GRPO improves success and quality at matched rollout budgets and reaches target success rates with up to 56\% fewer rollouts. It also suppresses a reward-hacking shortcut, with the cleaner behavior transferring under direct deployment to a real robot. Through ablations, we show consistent gains across contact-, smoothness-, and VLM-derived quality signals.
\end{abstract}
\section{Introduction}

Reinforcement learning is a standard way to improve vision-language-action (VLA) policies beyond
supervised fine-tuning, and GRPO~\citep{shao2024deepseekmath} is attractive because it avoids a
learned critic by comparing rollouts within a group. Under a binary success reward, however, GRPO
can be highly sample-inefficient. If all trajectories in a group succeed, or all of them fail,
every rollout receives the same reward. The group has no advantage spread, produces no gradient,
and is discarded by dynamic sampling \cite{yu2026dapo}. GRPO therefore learns only from mixed-outcome groups,
wasting the samples collected in degenerate groups.

\begin{figure}[!t]
\centering
\includegraphics[width=\columnwidth]{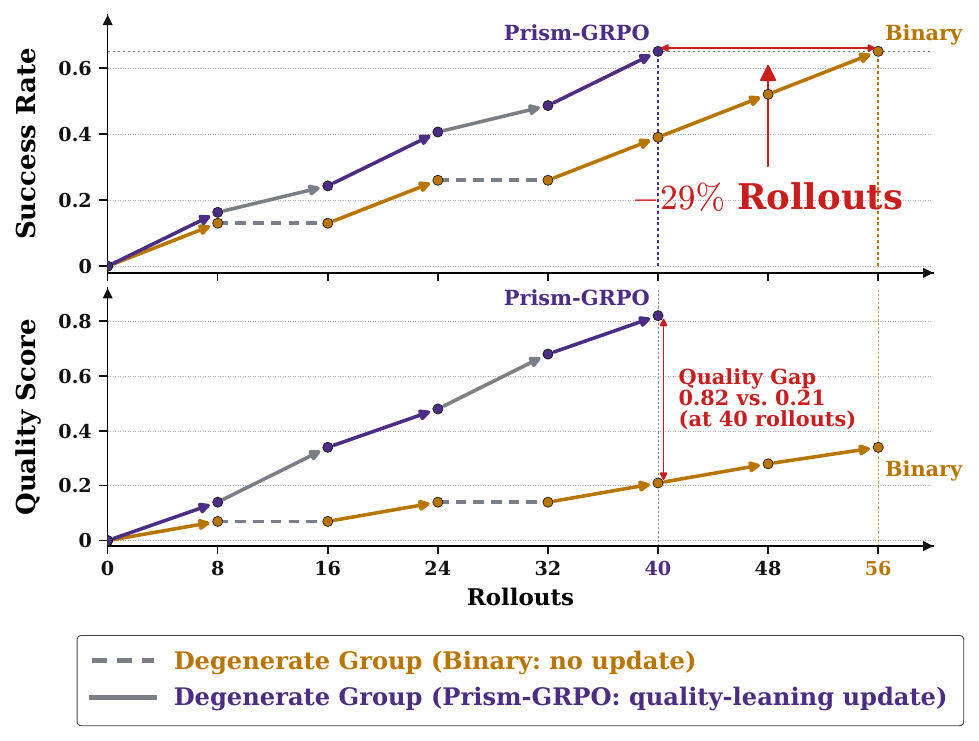}
\caption{
\textbf{Illustrative overview.}
Binary GRPO discards all-success and all-failure groups because identical rewards yield zero
advantage. Prism-GRPO splits them by execution quality to recover informative, success-aligned
updates.
}
\label{fig}
\end{figure}

Recent LLM reasoning work revives degenerate groups using finer-grained signals. RL-ZVP uses the
policy's output entropy~\citep{le2025no}, CAST uses self-teacher guidance~\citep{li2026cast}, and
LLM-as-a-Verifier~\citep{kwok2026llm} uses an external verifier's scoring-token logits. In
robotics, the execution itself provides a natural signal. Two rollouts with the same outcome may
differ in unintended contact, object disturbance, or motion smoothness. Unlike task progress,
these properties describe how the task is executed rather than how far it has advanced, making
them reusable across manipulation tasks. The only requirement is an observable trajectory score
aligned with successful execution, leaving its definition and measurement flexible.
We turn this observation into \textbf{Prism-GRPO}, \textbf{P}olicy optimization by
\textbf{R}efining \textbf{I}dentical-outcome groups into a quality \textbf{S}pectrum for
\textbf{M}ore efficient GRPO. 
As illustrated in Figure~\ref{fig}, Prism-GRPO uses execution quality to turn otherwise discarded
same-outcome groups into informative updates. Breaking reward ties alone, however, does not guarantee progress toward task success. Prism-GRPO therefore preserves success dominance by construction; we further characterize when the recovered signal advances success.
Our contributions are threefold:
% \begin{itemize}
% \item We propose Prism-GRPO, which abstracts trajectory quality across definitions and sources to
% recover degenerate groups while preserving success dominance.
% \item We show theoretically that Prism-GRPO improves sample efficiency and characterize when
% quality preserves progress on success.
% \item We show that Prism-GRPO improves sample efficiency and  quality in simulation, with real-world validation.
% \end{itemize}

\begin{itemize}
\item We introduce Prism-GRPO for reinforcement learning of VLA policies on robotic tasks, combining binary task success with bounded trajectory quality to recover training signal from same-outcome rollouts.

\item We prove that Prism-GRPO never increases the expected number of rollouts
needed to obtain an informative group. Under success--quality gradient alignment, its combined update is a first-order ascent direction for both objectives.

\item Across four RoboTwin tasks, Prism-GRPO reaches target
success rates with up to 56\% fewer rollouts while improving execution quality.
The gains hold across contact-, smoothness-, and VLM-derived signals, and cleaner behavior transfers under direct real-robot deployment.
\end{itemize}

% \noindent\textbf{Contributions.}
% \textbf{(1)} We introduce Prism-GRPO for reinforcement learning of VLA policies
% on robotic tasks, using bounded trajectory quality to learn from same-outcome
% groups.
% \textbf{(2)} We prove that it never increases the expected rollouts needed to
% obtain an informative group and characterize when its combined update ascends
% both success and quality.
% \textbf{(3)} Across four RoboTwin tasks, Prism-GRPO reaches target success rates
% with up to 56\% fewer rollouts, improves quality across contact-, smoothness-,
% and VLM-derived signals, and transfers cleaner behavior to a real robot.

\section{Related Work}
\label{sec:related}
\paragraph{Vision-language-action models.}
Vision-language-action (VLA) models map visual observations and a language instruction to a
robot action, typically the target pose and gripper state across the arm's degrees of freedom,
on top of a pretrained vision-language backbone. They split by how that action is represented.
OpenVLA~\citep{kim2024openvla,kim2025fine} discretizes each degree of freedom into bins and emits
one token per degree of freedom, so control becomes next-token prediction over a fixed action
vocabulary; the $\pi$ models~\citep{black2024pi_0,intelligence2025pi} instead use a flow-matching
head that outputs continuous actions directly. Pretrained by imitation, these policies are capped
at their demonstration quality; reinforcement learning is the standard fine-tuning stage that
lifts them past it.

\paragraph{Reinforcement learning for VLA policies.}
RL for continuous control has long relied on PPO~\citep{schulman2017proximal}, which
estimates advantages with a separate value network. GRPO~\citep{shao2024deepseekmath}
removes this critic, scoring a group of trajectories on the same input against their mean
reward, and has driven much of the progress in LLM post-training. Carrying it over to VLA
policies is harder than it looks: \citet{liu2026can} find that a single trajectory-level
reward cannot fairly credit individual actions when each physically moves the agent.
SimpleVLA-RL~\citep{li2025simplevla} nonetheless makes GRPO effective for VLAs by adapting
the rollout process to encourage exploration, through dynamic sampling that discards and
resamples same-outcome groups, a raised upper clipping bound (clip-higher,
following DAPO~\citep{yu2026dapo}), a higher rollout temperature, and no KL penalty. We build
directly on this recipe.

\paragraph{Enriching the binary reward.}
The binary reward used by SimpleVLA-RL~\citep{li2025simplevla} is cheap to check but blind
within an outcome: all successes score alike and so do all failures. A common way to add signal
is to replace or augment success with graded feedback. RoboReward~\citep{lee2026roboreward}
learns a vision-language reward model from robot data, while stage-level process rewards such as
StARe-VLA~\citep{xu2025stare} use task-specific intermediate scores.
LLM-as-a-Verifier~\citep{kwok2026llm} provides another source of graded feedback, computing
continuous verifier scores from scoring-token logits rather than discrete judge outputs and
using these scores as dense rewards.

\paragraph{Reviving degenerate groups.}
Under binary rewards, same-outcome GRPO groups have zero advantage spread and provide no update.
Recent LLM reasoning work asks whether these zero-variance groups can still provide useful signal,
but differs in where that signal comes from. \textbf{RL-ZVP}~\citep{le2025no} is the closest
baseline to ours and the first to treat zero-variance groups themselves as useful training data,
reviving them with entropy from the policy's own output distribution.
Other methods
add extra assistance. \textbf{EDGE-GRPO}~\citep{zhang2025edge} combines entropy-driven advantages
with guided error correction that supplies failed samples with oracle reference solutions,
introducing additional supervision beyond the sampled rollouts. \textbf{CAST}~\citep{li2026cast}
uses an additional self-teacher pass for token-level guidance, while
\textbf{LLM-as-a-Verifier}~\citep{kwok2026llm} uses an external verifier's scoring-token logits
to produce continuous scores rather than discrete judge labels. These methods are designed around
LLM reasoning, where intermediate reasoning quality is hard to measure directly. VLA rollouts are
physical executions, so their trajectories already expose quality signals such as collisions,
object disturbance, or smoothness. 
Prism-GRPO directly ranks same-outcome rollouts using trajectory signals, without oracle references,
helper models, or policy-confidence proxies.

\paragraph{Gradient conflict in multi-objective learning.}
Two objectives on one policy may have gradients that oppose each other, so the averaged update
improves one at the other's expense. Prior work addresses this by gating or projecting auxiliary
gradients \citep{du2018adapting,yu2020gradient}, or by controlling their influence through learned
weights or strict priorities~\citep{gupta2023behavior,berducci2021hierarchical}. These methods
assume conflict. We instead target manipulation settings where higher-quality behavior tends to
support task completion, allowing the two rewards to be added directly.

\section{Preliminary}
\paragraph{GRPO with binary reward and compositional reward.}
\label{sec:prelim-grpo}
We cast VLA control as a Markov decision process $(\mathcal{S},\mathcal{A},P,R,\rho_0)$: at
each step the policy $\pi_\theta$ observes a state $s_t\in\mathcal{S}$ (images and the
instruction), takes an action $a_t\in\mathcal{A}$ (a robot command), and the environment
transitions by $P(s_{t+1}\mid s_t,a_t)$. A full episode is a trajectory
$\tau=(s_0,a_0,\dots)$, and RL seeks the $\theta$ maximizing the expected return
$\mathbb{E}_{\tau\sim\pi_\theta}[R(\tau)]$. 
GRPO optimizes this objective without a learned value function: on each scene it samples a
\emph{group} of $G$ trajectories from $\pi_\theta$, scores them $R_1,\dots,R_G$, and forms each
advantage by subtracting the group's baseline reward, so the signal is purely
relative~\citep{shao2024deepseekmath}. The reward itself is left to the practitioner.

\begin{definition}[Degenerate group]
\label{def:degen}
A group is degenerate if $R_1=\cdots=R_G$, yielding zero advantages and no gradient; otherwise,
it is informative.
\end{definition}

\begin{definition}[Objective gradients and alignment]
\label{def:shared}
Let $\pi_\theta$ have parameters $\theta\in\mathbb{R}^{d}$, and define the success and quality
objectives as
$J(\theta)=\mathbb{E}_{x,\tau}[\mathrm{success}(\tau)]$ and
$Q(\theta)=\mathbb{E}_{x,\tau}[q(\tau)]$. Their gradients
$g_{\mathrm{success}}=\nabla_\theta J(\theta)$ and
$g_{\mathrm{quality}}=\nabla_\theta Q(\theta)$ lie in the same parameter space. 
They \emph{align} if
$\langle g_{\mathrm{success}},g_{\mathrm{quality}}\rangle\ge0$ and \emph{conflict} otherwise.
\end{definition}

\section{Prism-GRPO}
\label{sec:theory}

\subsection{Reward design and problem formulation}
\label{sec:formulation}

\paragraph{Tasks, scenes, trajectories.}
A task $T$ (e.g.\ stacking two cubes) induces a distribution over scenes; we write
$x\sim\mathrm{Scene}(T)$ for a scene, consisting of an initial configuration together with its
instruction. Running $\pi_\theta$ on $x$ produces a trajectory
$\tau\sim\pi_\theta(\cdot\mid x)$, scored by a binary outcome
$\mathrm{success}(\tau)\in\{0,1\}$ and a continuous quality score $q(\tau)\in[0,1]$, where
$q(\cdot)$ is a measurement of quality, such as smoothness, larger the better. A
GRPO group drawns $G$ trajectories on the same scene $x$.

\begin{algorithm}[t]
\caption{Prism-GRPO}
\label{alg:qgrpo}
\begin{algorithmic}[1]
\REQUIRE $\pi_\theta$, scene $x$, group size $G$, $\lambda\in(0,1)$
\REPEAT
    \STATE sample $\tau_1,\dots,\tau_G \sim \pi_\theta(\cdot\mid x)$
    \STATE $R_i \leftarrow \mathrm{success}(\tau_i)\,\textcolor{red}{{}+\,\lambda\, q(\tau_i)}$
\UNTIL{not all $R_i$ are equal} \COMMENT{\textcolor{blue}{discard $G$ rollouts per retry}}
\STATE $\hat{A}_i \leftarrow R_i - \tfrac{1}{G-1}\textstyle\sum_{j\neq i} R_j$
\STATE update $\theta$ using $\{\hat{A}_i\}$
\end{algorithmic}
\end{algorithm}

\paragraph{Quality-augmented reward.}
Prism-GRPO starts with the binary outcome reward and adds execution quality. For a trajectory
$\tau$, we define
\begin{equation}
\label{eq:rewards}
R_{\mathrm{combined}}(\tau)
=
\mathrm{success}(\tau)+\lambda q(\tau),
\qquad 0<\lambda<1 .
\end{equation}
This turns tied binary rewards into a success-dominant reward spectrum. As illustrated in
Figure~\ref{fig:prism-teaser}, an all-failure group has all rewards equal to $0$ under the
binary reward, and vise versa; therefore, both have no
 spread of advantages. The quality term breaks these ties by refining failures within $[0,\lambda]$
and successes within $[1,1+\lambda]$. Since $\lambda<1$, the two bands remain separated: every
success still scores above every failure, while quality only reorders trajectories within an
outcome class. We pair this reward with the leave-one-out (RLOO) advantage estimator
\citep{kool2019buy,ahmadian2024back}, which subtracts the siblings' mean but does not divide by
the group standard deviation. This matters on same-outcome groups, where the entire reward
spread is $\lambda q$: standard GRPO normalization would cancel the scale of $\lambda$
\citep{arora2026training}, while RLOO preserves the raw quality differences.

\paragraph{Analysis target.}
Our goal is to improve rollout efficiency without sacrificing task success, while also improving execution quality. Relative to Binary GRPO, Prism-GRPO should reach the same target value of the success objective \(J(\theta)\) using no more generated rollouts and, at that point, attain at least as large a value of the quality objective \(Q(\theta)\).

\subsection{When Quality Supports Success}
\label{sec:alignment}
% We next establish a sufficient condition for quality to improve both success and execution quality.

\paragraph{Task-level correlation remains informative.}
The quality term is useful only if higher-quality trajectories are also more likely to succeed.
We measure this relationship directly from rollouts using the task-level correlation $\kappa_2=\mathrm{Corr}_{x\sim\mathrm{Scene}(T),\,\tau\sim\pi_\theta(\cdot\mid x)}
\!\left[q(\tau),\mathrm{success}(\tau)\right]$,
where $x$ denotes a sampled scene, $\tau$ a trajectory generated by the current policy
$\pi_\theta$, $q(\tau)$ the trajectory quality, and
$\mathrm{success}(\tau)\in\{0,1\}$ the task outcome.
Although many GRPO groups become degenerate, $\kappa_2$ remains meaningful because it is computed
over all scenes rather than within a single group. Its covariance $\mathrm{Cov}(q,\mathrm{success})$ decomposes as
\begin{equation}
\label{eq:totalcov}
\begin{aligned}
\underbrace{\mathbb{E}_{x}
\!\left[\mathrm{Cov}(q,\mathrm{success}\mid x)\right]}_{\text{within-scene}}
+
\underbrace{\mathrm{Cov}_{x}
\!\left(\bar q(x),\bar s(x)\right)}_{\text{between-scene}},
\end{aligned}
\end{equation}
where
$\bar q(x)=\mathbb{E}[q(\tau)\mid x]$
and
$\bar s(x)=\mathbb{E}[\mathrm{success}(\tau)\mid x]$
are the average quality and success rate on scene $x$.
The within-scene term vanishes when a scene is saturated (all rollouts share the same outcome),
but the between-scene term can remain positive because the policy tends to achieve higher quality
on easier scenes where it succeeds more often. Consequently, $\kappa_2$ continues to capture the
overall quality--success relationship even when many GRPO groups are degenerate.

\paragraph{From correlation to gradient alignment.}
Our theory assumes population-level alignment,
$\langle g_{\mathrm{success}},g_{\mathrm{quality}}\rangle\ge0$, where
$g_{\mathrm{success}}=\nabla_\theta\mathbb{E}[\mathrm{success}]$ and
$g_{\mathrm{quality}}=\nabla_\theta\mathbb{E}[q]$. 
To study this condition, we analyze a single softmax decision and identify a sufficient mechanism
for alignment.
Let $\rho(s)$ denote the policy-weighted correlation between the actions' success and quality values.
The proposition below shows that sufficiently high $\rho(s)$ yields decision-level alignment.

\begin{figure}[!t]
\centering
\includegraphics[width=\columnwidth]{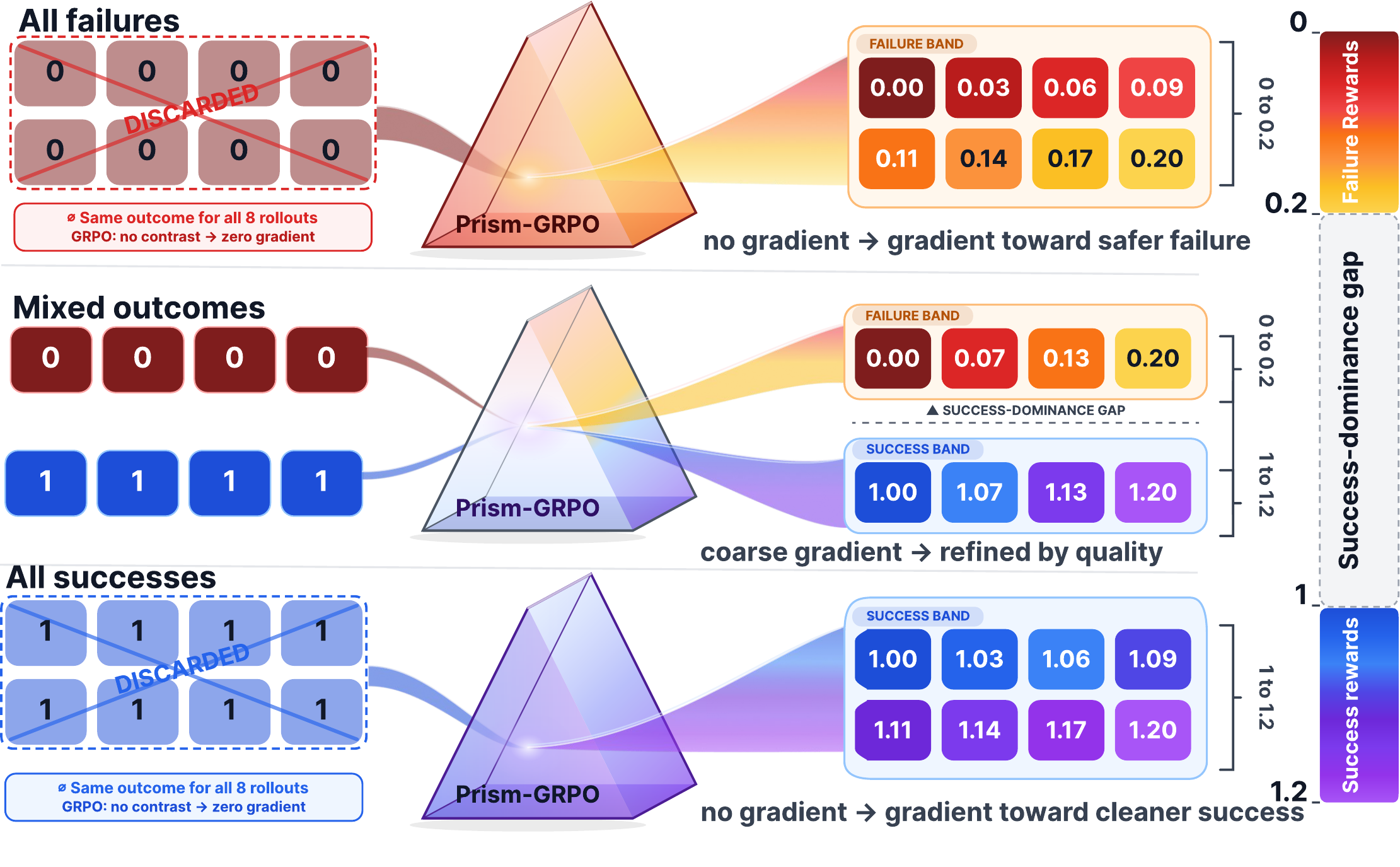}
\caption{\textbf{Prism-GRPO splits same-outcome groups into a reward spectrum.} Colors indicate rewards. $\lambda$=0.2.}
\label{fig:prism-teaser}
\end{figure}

% \begin{proposition}
% \label{prop:corr-to-align}
% Consider a single softmax decision at a state $s$, over $K$ actions with distribution
% $\pi=\pi_\theta(\cdot\mid s)$.
% Let $u_k$ and $v_k$ denote the success and quality values of action $k$, respectively. Let
% $\rho(s)=\mathrm{Corr}_{k\sim\pi}(u_k,v_k)$
% be their policy-weighted correlation.
% Assume both $u$ and $v$ take at least two distinct values under $\pi$.
% Let $\kappa_\pi\ge1$ be the conditioning number of $\pi$, which measures how unevenly probability
% is distributed across actions ($\kappa_\pi=1$ for a uniform policy and increases as the policy
% becomes more concentrated). Appendix~\ref{app:corr-to-align} gives its precise definition and the
% proof.
% Writing
% $g_{\mathrm{success}}(s)$
% and
% $g_{\mathrm{quality}}(s)$
% for the policy gradients of this decision,
% \begin{equation}
% \label{eq:corr-threshold}
% \rho(s)\ge
% \frac{\kappa_\pi-1}{\kappa_\pi+1}
% \Longrightarrow
% \langle
% g_{\mathrm{success}}(s),
% g_{\mathrm{quality}}(s)
% \rangle
% \ge0.
% \end{equation}
% \end{proposition}

\begin{proposition}
\label{prop:corr-to-align}
Consider a single softmax decision at state $s$ over $K$ actions, with distribution
$\pi=\pi_\theta(\cdot\mid s)$. Let $u_k$ and $v_k$ denote the success and quality values of
action $k$, and let $\rho(s)=\mathrm{Corr}_{k\sim\pi}(u_k,v_k)$ be their policy-weighted
correlation. Assume both $u$ and $v$ take at least two distinct values under $\pi$.
Let $D_\pi=\operatorname{diag}(\pi)$, and let $\kappa_\pi\ge1$ be its condition number on the
policy-centered subspace. It measures how unevenly probability is distributed across actions:
$\kappa_\pi=1$ for a uniform policy and can increase as the policy becomes more concentrated.
Appendix~\ref{app:corr-to-align} gives the precise definition and proof. Writing
$g_{\mathrm{success}}(s)$ and $g_{\mathrm{quality}}(s)$ for the corresponding policy gradients,
\begin{equation}
\label{eq:corr-threshold}
\rho(s)\ge
\frac{\kappa_\pi-1}{\kappa_\pi+1}
\Longrightarrow
\left\langle
g_{\mathrm{success}}(s),
g_{\mathrm{quality}}(s)
\right\rangle
\ge0.
\end{equation}
\end{proposition}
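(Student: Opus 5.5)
Model the decision as softmax over logits $\theta\in\mathbb{R}^K$, so $\pi_k=\mathrm{softmax}(\theta)_k$. For any per-action value vector $w$ we have $\nabla_\theta \mathbb{E}_{k\sim\pi}[w_k]=(D_\pi-\pi\pi^\top)w=D_\pi(w-\bar w\mathbf 1)$, where $\bar w=\pi^\top w$. Write $\tilde u=u-\bar u\mathbf 1$ and $\tilde v=v-\bar v\mathbf 1$ for the policy-centered vectors; both satisfy $\pi^\top\tilde u=\pi^\top\tilde v=0$. Then
\[
\langle g_{\mathrm{success}}(s),g_{\mathrm{quality}}(s)\rangle=\tilde u^\top D_\pi^2\,\tilde v=\langle D_\pi^{1/2}\tilde u,\;D_\pi\,D_\pi^{1/2}\tilde v\rangle ,
\]
while the policy-weighted covariance is $\mathrm{Cov}_\pi(u,v)=\tilde u^\top D_\pi\tilde v=\langle a,b\rangle$ with $a=D_\pi^{1/2}\tilde u$ and $b=D_\pi^{1/2}\tilde v$. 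So the claim is about how far the inner product $\langle a,D_\pi b\rangle$ can deviate from $\langle a,b\rangle$ when $a$ and $b$ lie in the subspace $W=\{z:\langle \sqrt\pi,z\rangle=0\}$. This is the image of the policy-centered subspace.

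I will define $\kappa_\pi$ precisely as $M/m$, where $M$ and $m$ are the largest and smallest values of the Rayleigh quotient $z^\top D_\pi z/z^\top z$ over nonzero $z\in W$. That is, $\kappa_\pi$ is the condition number of the compression $A=P_WD_\pi P_W$ restricted to $W$. It equals $1$ for uniform $\pi$. I also need $m>0$, which holds when $\pi$ has full support, as it does for softmax. Since $a,b\in W$, we have $\langle a,D_\pi b\rangle=\langle a,Ab\rangle$, where $A$ is symmetric positive definite on $W$ with spectrum in $[m,M]$. The hypothesis that $u$ and $v$ each take at least two values under $\pi$ guarantees $a,b\neq0$, so $\rho=\langle a,b\rangle/(\|a\|\|b\|)$ is well defined.

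The key step is a Wielandt/Kantorovich-type angle inequality: for symmetric $A\succ0$ with condition number $\kappa$, and unit vectors $a,b$ with $\langle a,b\rangle=\cos\phi$, the inner product $\langle a,Ab\rangle\ge0$ whenever $\cos\phi\ge(\kappa-1)/(\kappa+1)$. I would prove it by normalizing $\|a\|=\|b\|=1$ and writing $a=(e_1+e_2)/\|e_1+e_2\|$ and $b=(e_1-e_2)/\|e_1-e_2\|$ for a suitable orthonormal pair. Then $\langle a,Ab\rangle\propto e_1^\top Ae_1-e_2^\top Ae_2$ up to a positive factor, so it suffices to bound $e_1^\top A e_1-e_2^\top Ae_2$ from below. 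A cleaner route, which I would try first, is the identity $4\langle a,Ab\rangle=(a+b)^\top A(a+b)-(a-b)^\top A(a-b)$. Bounding the first term below by $m\|a+b\|^2=2m(1+\rho)$ and the second above by $M\|a-b\|^2=2M(1-\rho)$ gives
\[
4\langle a,Ab\rangle\ge 2m(1+\rho)-2M(1-\rho),
\]
which is nonnegative exactly when $\rho\ge(M-m)/(M+m)=(\kappa_\pi-1)/(\kappa_\pi+1)$. This is precisely the threshold in \eqref{eq:corr-threshold}, and it needs only the Rayleigh bounds, since $a\pm b\in W$. Positive rescaling of $a$ and $b$ does not change signs, so the normalization is harmless.

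The main obstacle is bookkeeping rather than analysis. First, I must make sure the condition number is taken on the correct subspace. The vectors $a\pm b$ lie in $W$ but are generally not eigenvectors of $D_\pi$, so the bounds must come from the compression $P_WD_\pi P_W$ and not from $\max\pi/\min\pi$. The latter is a valid but looser upper bound on $\kappa_\pi$, and I would remark on it. Second, I must check the centering: the gradient involves $D_\pi\tilde u$, and $\tilde u^\top D_\pi^2\tilde v=a^\top D_\pi b$ holds exactly because $D_\pi^{1/2}$ commutes with $D_\pi$.
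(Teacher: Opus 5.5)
Your argument is correct, and its setup matches the paper's. You use the same substitution $a=D_\pi^{1/2}\tilde u$, $b=D_\pi^{1/2}\tilde v$ into the subspace orthogonal to $\sqrt{\pi}$, the same identities $\rho(s)=\cos(a,b)$ and $\langle g_{\mathrm{success}}(s),g_{\mathrm{quality}}(s)\rangle=a^\top D_\pi b$, and the same $\kappa_\pi$, namely the condition number of the compression of $D_\pi$ to that subspace. The difference is the key lemma.

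The paper bounds the $A$-metric cosine, $\cos_A(w,z)\ge h_\kappa(\rho)=\frac{(\kappa+1)\rho-(\kappa-1)}{(\kappa+1)-(\kappa-1)\rho}$. To do so it compresses $A$ to the two-dimensional span of $w,z$ and writes $w,z=ce\pm\sigma f$. It then expresses the cosine through $E,F,G$ and $\det A$, minimizes over a rotation angle, and uses that $h_\kappa$ decreases in $\kappa$. The sign conclusion is read off from $h_\kappa(\rho)\ge0$.

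Your polarization identity, combined with the two Rayleigh bounds on $a\pm b$, replaces all of this with one line and gives exactly the stated threshold. It is also sharp: equality holds precisely when $a+b$ is a bottom eigenvector and $a-b$ a top eigenvector of the compression. That is the paper's extremal example, so the threshold cannot be lowered.

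What the paper's longer route buys is a quantitative statement. It gives a lower bound $h_{\kappa_\pi}(\rho(s))$ on the cosine between the two gradients, not just the sign of their inner product.

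One small slip in your first sketch: $e_1,e_2$ should be orthogonal with $\|e_1\|^2=(1+\rho)/2$ and $\|e_2\|^2=(1-\rho)/2$, not orthonormal, since an orthonormal pair would force $\rho=0$. With $e_{1,2}=(a\pm b)/2$, that sketch is just the polarization identity again.
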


\paragraph{Interpretation and scope.}
The threshold $(\kappa_\pi-1)/(\kappa_\pi+1)$ depends on policy conditioning: it is $0$ for a
uniform policy and approaches $1$ only under extreme concentration. Modern RL methods discourage
such collapse through clip-higher, temperature sampling, and entropy regularization
\citep{yu2026dapo,cui2025entropy,simoni2025gtpo}; our implementation inherits clip-higher and
temperature-$1$ sampling from SimpleVLA-RL~\citep{li2025simplevla}. For example,
$\pi=(0.97,0.015,0.015)$ has $\kappa_\pi\approx2.9$, requiring
$\rho(s)\approx0.49$ for decision-level alignment. This gives a tractable alignment mechanism at
a single decision, but does not establish population-level alignment across scenes, states, and
stochastic long-horizon trajectories. Finite rollouts yield noisy estimates, and reliable
verification would require many batches with full-model gradient aggregation. We therefore retain
$\langle g_{\mathrm{success}},g_{\mathrm{quality}}\rangle\ge0$ as an explicit assumption, report
correlation diagnostics in Appendix~\ref{app:correlation}, and evaluate it empirically in full
Prism-GRPO training.

\subsection{The combined update points toward success}
\label{sec:direction}
Under the alignment condition in Section~\ref{sec:alignment}, the combined gradient improves quality without sacrificing success.
\begin{proposition}[The combined gradient ascends both success and quality]
\label{prop:align}
Whenever the population gradients align,
$\langle g_{\mathrm{success}},g_{\mathrm{quality}}\rangle\ge0$, the combined direction
$g_{\mathrm{combined}}=g_{\mathrm{success}}+\lambda g_{\mathrm{quality}}$ is a first-order
ascent direction for both the success objective $J(\theta)$ and the quality objective
$Q(\theta)$. In particular,
$\langle\nabla_\theta J,g_{\mathrm{combined}}\rangle
\ge\|g_{\mathrm{success}}\|^2$ and
$\langle\nabla_\theta Q,g_{\mathrm{combined}}\rangle
\ge\lambda\|g_{\mathrm{quality}}\|^2$.
Both are non-negative and are strictly positive whenever the corresponding gradient is nonzero.
The proof is given in Appendix~\ref{app:align}.
\end{proposition}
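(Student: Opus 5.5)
The plan is to expand the inner products bilinearly and use only the alignment assumption together with $\lambda>0$. By Definition~\ref{def:shared}, $\nabla_\theta J=g_{\mathrm{success}}$ and $\nabla_\theta Q=g_{\mathrm{quality}}$. Substituting $g_{\mathrm{combined}}=g_{\mathrm{success}}+\lambda g_{\mathrm{quality}}$ gives
\begin{equation*}
\langle\nabla_\theta J,g_{\mathrm{combined}}\rangle=\|g_{\mathrm{success}}\|^2+\lambda\langle g_{\mathrm{success}},g_{\mathrm{quality}}\rangle,
\qquad
\langle\nabla_\theta Q,g_{\mathrm{combined}}\rangle=\langle g_{\mathrm{quality}},g_{\mathrm{success}}\rangle+\lambda\|g_{\mathrm{quality}}\|^2 .
\end{equation*}

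For the first identity, $\lambda\in(0,1)$ and the alignment assumption $\langle g_{\mathrm{success}},g_{\mathrm{quality}}\rangle\ge0$ make the cross term non-negative, so the inner product is at least $\|g_{\mathrm{success}}\|^2$. For the second, the cross term is non-negative by the same assumption, so the inner product is at least $\lambda\|g_{\mathrm{quality}}\|^2$. Both lower bounds are non-negative. They are strictly positive whenever $g_{\mathrm{success}}\neq0$ (respectively $g_{\mathrm{quality}}\neq0$), because $\lambda>0$.

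To justify the phrase ``first-order ascent direction,'' I will add a short Taylor argument. Assuming $J$ and $Q$ are differentiable at $\theta$, we have $J(\theta+\eta g_{\mathrm{combined}})=J(\theta)+\eta\langle\nabla_\theta J,g_{\mathrm{combined}}\rangle+o(\eta)$. Hence, whenever the inner product is strictly positive, $J$ increases for all sufficiently small $\eta>0$; the same holds for $Q$.

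There is no real obstacle here; the result is a two-line algebraic consequence of the alignment assumption. The only point needing care is an identification. I must make explicit that $g_{\mathrm{combined}}$ equals $\nabla_\theta\mathbb{E}[R_{\mathrm{combined}}]$, which follows from linearity of expectation and of the gradient applied to Eq.~\eqref{eq:rewards}. This identification is also what ties the combined direction to the reward that Prism-GRPO actually optimizes.
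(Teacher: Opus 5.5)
Your proposal is correct and is essentially the paper's proof in Appendix~\ref{app:align}. Like the paper, you expand $\langle\nabla_\theta J,g_{\mathrm{combined}}\rangle$ and $\langle\nabla_\theta Q,g_{\mathrm{combined}}\rangle$ bilinearly and bound each from below by dropping the cross term, which the alignment assumption makes non-negative; your Taylor remark and the identification $g_{\mathrm{combined}}=\nabla_\theta\mathbb{E}[R_{\mathrm{combined}}]$ are harmless additions that the paper leaves implicit.
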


Proposition~\ref{prop:align} shows that $g_{\mathrm{combined}}$ preserves first-order progress on
success, with directional derivative at least $\|g_{\mathrm{success}}\|^2$, while also improving
quality. Thus, the quality signal can improve execution quality without pulling the update away
from the target success level. 

\begin{figure*}[!t]
\centering
\includegraphics[width=\textwidth]{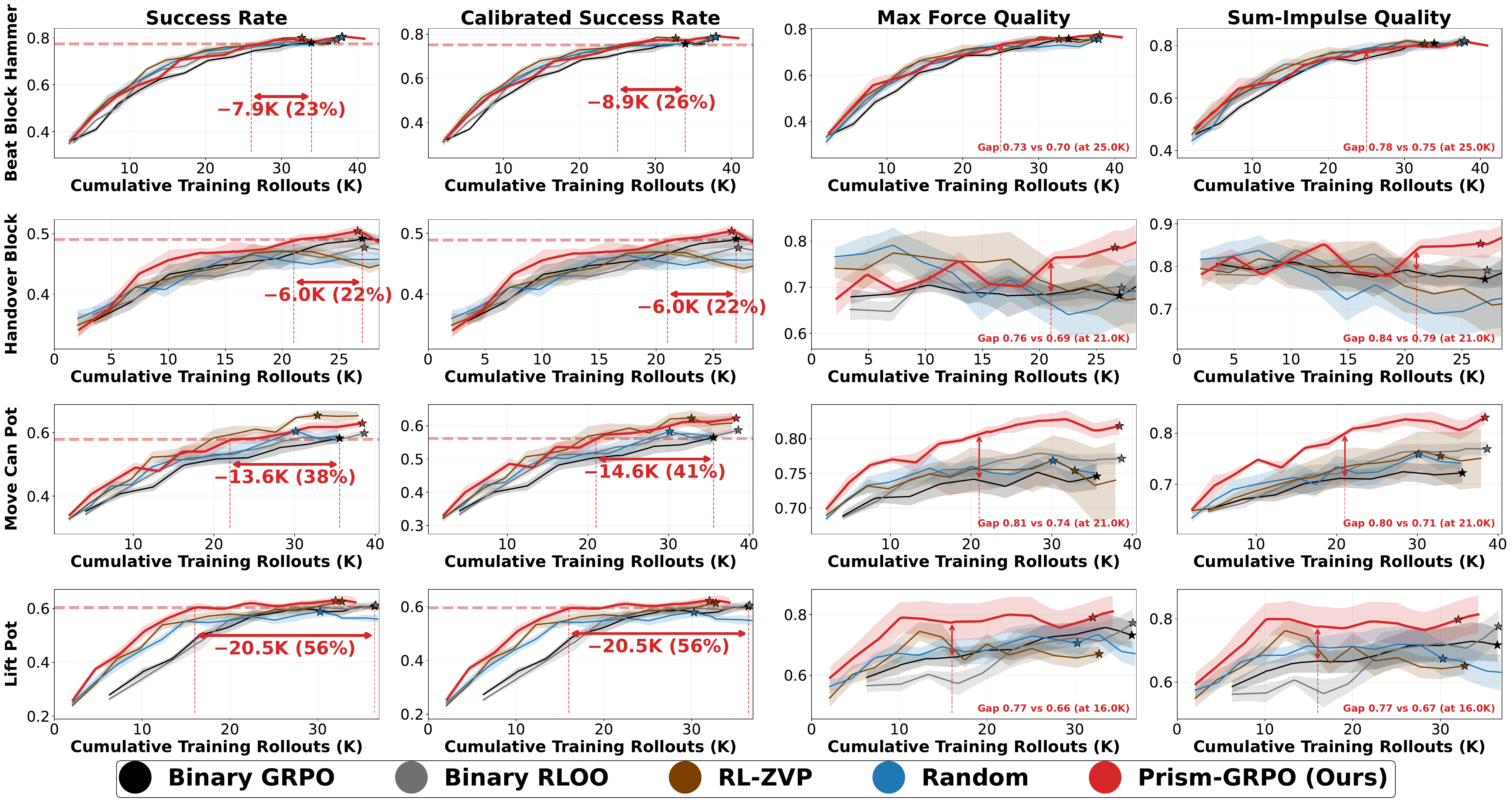}
\caption{\textbf{Main results.}$\lambda=0.2$. Stars mark each curve's peak mean success, and arrows show the rollouts saved by Prism-GRPO when matching Binary GRPO's peak mean success.}
\label{fig:main}
\end{figure*}

\subsection{Improved sample efficiency}
\label{sec:degeneracy}
Dynamic sampling discards zero-advantage groups. We measure sample efficiency by rollouts per
informative group.
% Section~\ref{sec:direction}.
\begin{theorem}[The combined reward is at least as sample-efficient as the binary reward]
\label{thm:cost}
Fix a scene with success rate $p\in(0,1)$ and group size $G\ge2$, and let $\delta(p)$ be the
probability that all $G$ combined rewards in a group coincide. The expected rollouts per
informative group are $C_{\mathrm{binary}}(p)=G/(1-p^{G}-(1-p)^{G})$ and
$C_{\mathrm{combined}}(p)=G/(1-\delta(p))$, respectively. They satisfy
\begin{equation}
\label{eq:cost}
C_{\mathrm{combined}}(p)\le C_{\mathrm{binary}}(p)
\qquad\text{for all }p\in(0,1).
\end{equation}
\end{theorem}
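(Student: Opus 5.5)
The plan is to reduce both cost formulas to a single geometric-trial argument and then compare the two per-group discard probabilities through an event inclusion.

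\textbf{Step 1: the cost formulas.} Groups are drawn i.i.d.\ on the fixed scene, each costing $G$ rollouts. Let $\delta$ denote the probability that a group is degenerate in the sense of Definition~\ref{def:degen}. The number of groups drawn until the first informative one is geometric with success probability $1-\delta$, so Wald's identity (or direct summation) gives an expected cost of $G/(1-\delta)$ rollouts.

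\emph{Binary reward.} The rewards are i.i.d.\ Bernoulli$(p)$, so the group is degenerate exactly when all $G$ rollouts succeed or all fail. These events are disjoint, so $\delta_{\mathrm{binary}}(p)=p^G+(1-p)^G$, which yields $C_{\mathrm{binary}}$.

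\emph{Combined reward.} Taking $\delta=\delta(p)$ yields $C_{\mathrm{combined}}$.

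Both denominators are positive. We have $1-p^G-(1-p)^G>0$ for $p\in(0,1)$ and $G\ge2$, and $\delta(p)\le\delta_{\mathrm{binary}}(p)$ will follow from Step 2.

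\textbf{Step 2: event inclusion.} Let $E_{\mathrm{comb}}=\{R_1=\cdots=R_G\}$ under $R=\mathrm{success}+\lambda q$, and let $E_{\mathrm{bin}}=\{s_1=\cdots=s_G\}$ for the outcomes $s_i=\mathrm{success}(\tau_i)$. I claim $E_{\mathrm{comb}}\subseteq E_{\mathrm{bin}}$.

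Suppose $R_i=R_j$ but $s_i\ne s_j$, say $s_i=1$ and $s_j=0$. Since $q\in[0,1]$ and $0<\lambda<1$, we get $R_i\ge1>\lambda\ge R_j$, a contradiction. This is exactly the band separation noted after \eqref{eq:rewards}. Hence equal combined rewards force equal outcomes, and monotonicity of probability gives $\delta(p)\le p^G+(1-p)^G$.

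\textbf{Step 3: monotonicity.} The map $\delta\mapsto G/(1-\delta)$ is increasing on $[0,1)$. Applying it to the inequality from Step 2 gives \eqref{eq:cost}.

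\textbf{Main obstacle.} The only delicate point is the strictness of the band separation at the boundary. If $\lambda=1$ were allowed, a failure with $q=1$ would tie a success with $q=0$ (both scoring $1$), and the inclusion would fail. The hypothesis $\lambda<1$ in \eqref{eq:rewards} rules this out, and I will state explicitly that this is where it is used.

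One could also remark that the inequality is strict whenever the quality score splits some same-outcome group with positive probability. Since the theorem only claims $\le$, I will not pursue this.
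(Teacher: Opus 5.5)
Your proposal is correct and follows essentially the same argument as the paper: a geometric count of groups until the first informative one, band separation from $0<\lambda<1$ ruling out ties in mixed-outcome groups, and monotonicity of $G/(1-\delta)$. The only difference is in phrasing: the paper writes $\delta(p)=p^{G}t_{\mathrm{succ}}+(1-p)^{G}t_{\mathrm{fail}}$ with conditional tie probabilities $t_{\mathrm{succ}},t_{\mathrm{fail}}\le1$, and reuses this form to describe the $p\to0,1$ limits, whereas you express the same step as the event inclusion $E_{\mathrm{comb}}\subseteq E_{\mathrm{bin}}$.
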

The proof, including coincident combined rewards, is given in
Appendix~\ref{app:cost}. For continuous conditional quality scores,
exact ties occur with probability zero, so $C_{\mathrm{combined}}(p)=G$ and every
group is informative. At $p=0.1$ or $0.9$, Binary GRPO then requires approximately
$1.76\times$ as many rollouts for $G=8$ and $2.91\times$ for $G=4$.

\paragraph{Rescued groups train quality, yet still move success.}
Theorem~\ref{thm:cost} is a fixed-scene statement: near $\bar s(x)=0$ or $\bar s(x)=1$ every
rollout in the group shares an outcome, so the binary term is constant and the group's entire
advantage spread comes from $\lambda q$. Such a rescued group thus contributes only
$g_{\mathrm{quality}}$, and appears to train quality alone. It does not. By the alignment above,
$g_{\mathrm{quality}}$ projects non-negatively onto $g_{\mathrm{success}}$, so the quality-only
gradient from a saturated scene still advances the task-level success objective, which
Proposition~\ref{prop:align} defines over all scenes rather than any one group. 
This fails only
when $g_{\mathrm{success}}=0$ over the whole task, meaning $\bar s(x)=1$ everywhere, where the task
is solved, or $\bar s(x)=0$ everywhere, where no trajectory ever succeeds. 
The zero-success case also defeats standard GRPO, which likewise assumes an SFT initialization with some task competence to bootstrap RL~\citep{li2025simplevla}; Ours requires no more.

\section{Experiments}
\paragraph{Benchmark and training setup.}
All experiments use RoboTwin 2.0~\citep{chen2025robotwin}, a bimanual manipulation benchmark
with domain randomization over clutter, lighting, background, tabletop height, and language
instruction. Each randomized task instance defines a scene
$x\sim\mathrm{Scene}(T)$, and all rollouts in a GRPO group are sampled from the same scene.
Thus, group degeneracy is determined per scene, while training aggregates updates across scenes.
We evaluate four tasks: \textit{Lift Pot}, \textit{Move Can Pot},
\textit{Handover Block}, and \textit{Beat Block Hammer}, spanning effectively single-arm and
bimanual control as well as short- to long-horizon manipulation. Task details are provided in
Appendix~\ref{app:tasks}. 
Following SimpleVLA-RL~\citep{li2025simplevla}, we use $G=8$ and 512 rollouts per update, replacing the binary reward with $\mathrm{success}+\lambda q$. We use $\lambda=0.2$ by default and ablate it in the main paper. Appendix~\ref{app:exp} provides the $G$ ablation and shows that applying the combined reward to all groups is necessary to realize the full gains; thus, we use it for all groups in the main experiments. We train each configuration with five random seeds and report means with standard-error intervals.

\paragraph{Default quality signal.}
In the main experiments, we derive $q$ from \textbf{GT-Max Force}, defined as the largest force applied to any non-target object during a rollout (Appendix~\ref{app:quality_signal}). For example, in \textit{Lift Pot}, the pot is the target object and all other objects are treated as non-target. We use this signal because unintended contacts are common, easy to measure, and can displace objects or interfere with later execution. Unless otherwise specified, the resulting quality score is used as $q$ during training. Figure~\ref{fig:ablations} evaluates two additional collision-based signals and three smoothness-based signals, showing that Prism-GRPO is not tied to a specific quality definition. 

\paragraph{Evaluation metrics.}
We evaluate both task success and execution quality. \textbf{Success rate} is the fraction of
completed trials. \textbf{Calibrated success rate} discounts successful trajectories according to
the number of non-target contacts, regardless of contact force. \textbf{Max-Force Quality}
measures the severity of the strongest non-target contact and corresponds to the default
GT-Max Force training signal. \textbf{Sum-Impulse Quality} aggregates impulse over all
non-target contacts throughout the trajectory, capturing the overall collision burden. These
metrics capture complementary aspects of collision quality. Unless otherwise specified,
calibrated success rate and Max-Force Quality are our primary success and quality metrics.
All non-binary metrics are normalized to $[0,1]$, with higher values indicating better performance.
Appendix~\ref{app:quality_metrics} provides metric definitions, while
Appendix~\ref{app:g1} reports all metrics for experiments not fully shown in the main paper.

\paragraph{Baselines.}
We compare Prism-GRPO against four baselines.
\textbf{Binary GRPO} is the SimpleVLA-RL
configuration, using the binary success reward with standard group-standardized advantages.
\textbf{Binary RLOO} replaces this estimator with leave-one-out advantages.
\textbf{Random quality} preserves the binary success term but replaces
the quality score $q$ with a random scalar, testing whether the gains come from meaningful quality
information rather than random variation. 
\textbf{RL-ZVP}~\citep{le2025no}
revives degenerate groups using entropy-shaped advantages; we adapt it to manipulation with
per-step action entropy. 
We exclude methods requiring extra supervision, such as external verifiers, self-teacher passes, or oracle demonstrations; external scoring is covered by the \textbf{VLM-Predicted} ablation.
Full details are in
Appendix~\ref{app:baseline}.

\subsection{Sample Efficiency and Execution Quality}
\label{sec:main-results}
\paragraph{Prism-GRPO improves rollout efficiency and execution quality.}
Figure~\ref{fig:main} compares all methods against total generated rollouts across four tasks. At
matched rollout budgets, Prism-GRPO improves rollout savings over Binary GRPO by $22$--$56\%$, with
calibrated success following the same trend. The largest gain occurs on \textit{Lift Pot}, where
% Binary GRPO's zero-variance filter drops up to $70\%+$ early rollouts and about $20\%$ throughout training,
% whereas Prism-GRPO keeps the discard rate near $10\%$ by recovering degenerate groups.
Binary GRPO discards ~70\% of the earliest rollout batches and stabilizes at ~20\% through the rest of training, whereas Prism-GRPO keeps the discard rate at essentially 0\% early on and around ~14\% throughout the remainder.
Appendix~\ref{app:g1} reports detailed rollout statistics for all tasks. Moreover, when
Prism-GRPO reaches a given target success rate, it also exhibits higher execution quality. 
Although Prism-GRPO is trained using Max-Force Quality,
it also improves Sum-Impulse Quality at evaluation, demonstrating cross-metric generalization rather
than overfitting to a specific collision measure.
On Handover Block, RL-ZVP shows much worse quality than the other methods:
catastrophic contacts with multiple objects can disrupt the tabletop scene, as illustrated in
Figure~\ref{fig:qualitative}(right). Overall, Prism-GRPO performs best among the baselines,
except that RL-ZVP reaches higher raw success on \textit{Move Can Pot}; however, its large gap
between raw and calibrated success shows that many successful rollouts hacked the binary checker. We explain this shortcut behavior next.

\begin{figure*}[!t]
\centering
\includegraphics[width=\textwidth]{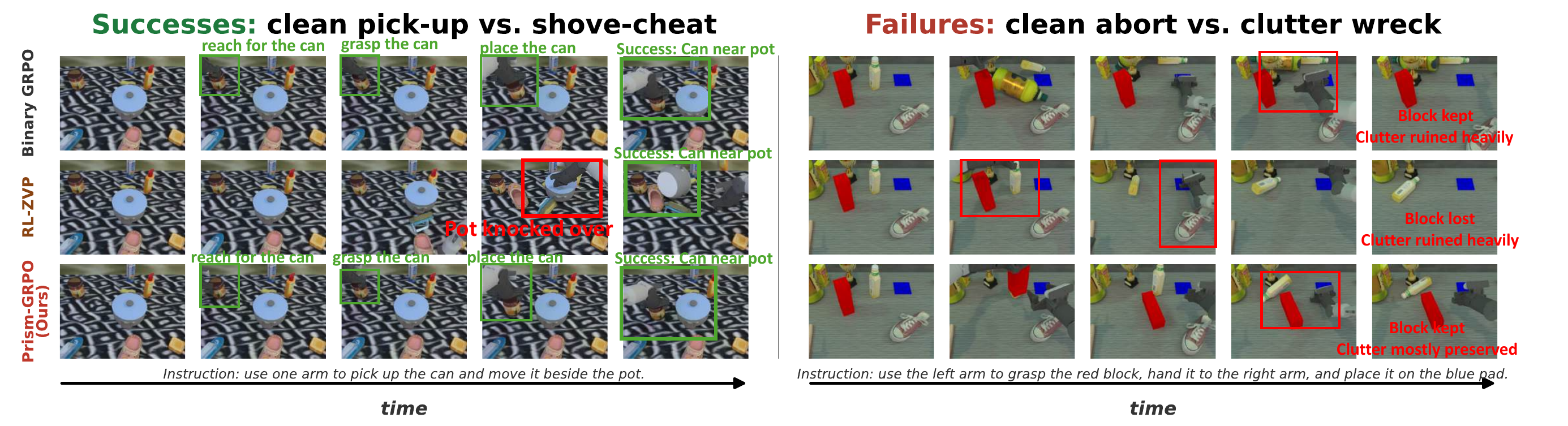}
\caption{\textbf{Qualitative examples.} 
\textbf{Left:} \textit{Move Can Pot}: RL-ZVP shove-cheat vs.\ clean executions by the other methods.
\textbf{Right:}
\textit{Handover Block} shows worse failures cause
stronger unintended contacts and larger scene disturbance.}
\label{fig:qualitative}
\end{figure*}

 \begin{figure}[!t]
    \centering
    \includegraphics[width=\linewidth]{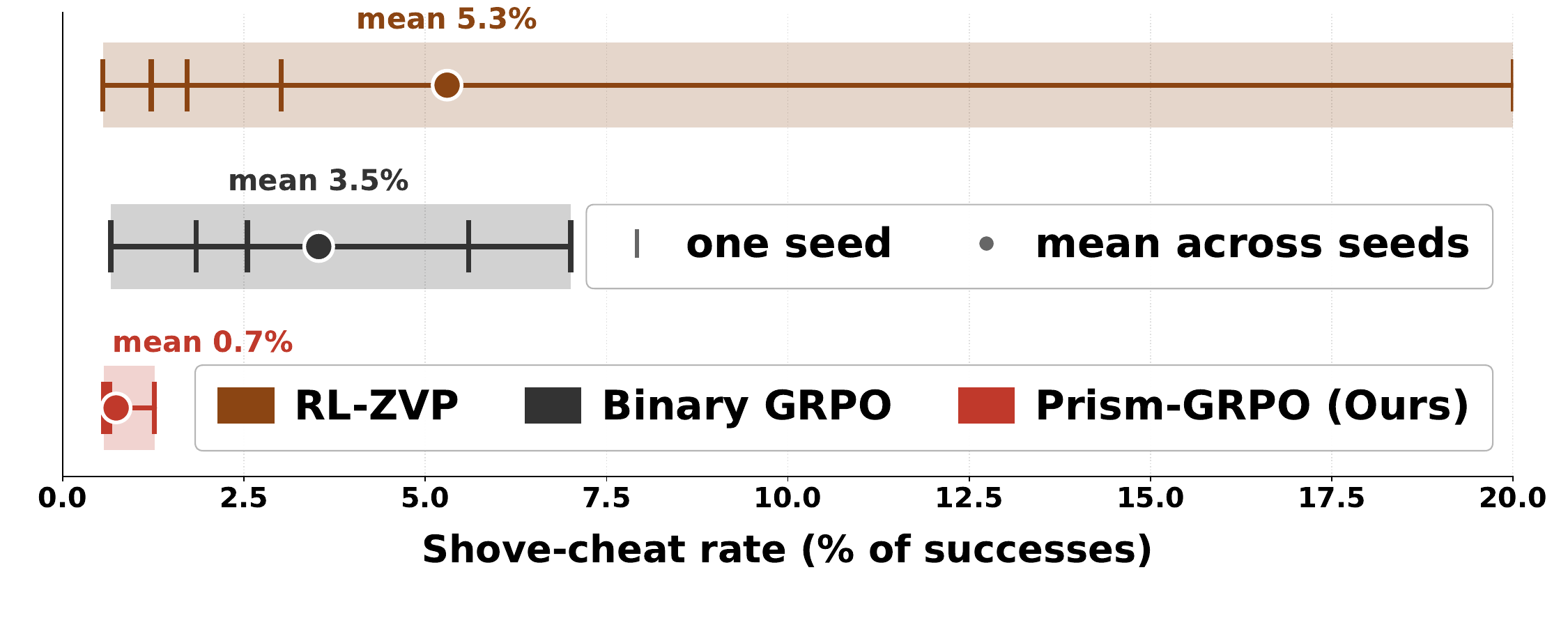}
\caption{\textbf{Shove-cheat behavior on \textit{Move Can Pot}.
}5 seeds each; ticks = per-seed rate, band = min--max, dot = mean. }
    \label{fig:shove_cheat}
\end{figure} 

\paragraph{Shove-cheat exploitation on \textit{Move Can Pot}.}
The \textit{Move Can Pot} instruction requires the arm to lift the can and
place it beside the pot, whereas the success checker considers only the
final geometry: the can must end beside the pot, regardless of whether it
was lifted or whether the pot moved. This mismatch permits a shove-cheat,
where the arm pushes the pot toward a stationary can and is still marked
successful. Figure~\ref{fig:qualitative} (left) illustrates the behavior
qualitatively: RL-ZVP strikes and displaces the pot instead of performing
the instructed lift-and-place. 
Figure~\ref{fig:shove_cheat} reports the shove-cheat rate, where a shove-cheat is defined as a successful rollout that displaces the pot by at least $10$ cm.
RL-ZVP ranges from $0.6$--$20.0\%$ and Binary GRPO from $0.7$--$7.0\%$, whereas all
Prism-GRPO seeds remain within $0.6$--$1.3\%$. The highest RL-ZVP rate is over
$15\times$ that of Prism-GRPO.

\paragraph{Quality steers online RL away from reward shortcuts.}
Online RL can discover behaviors beyond the SFT policy~\citep{li2025simplevla}, but imperfect
success checkers may also reward unintended shortcuts~\citep{skalse2022defining,yao2026multimodal}.
On \textit{Move Can Pot}, shoving the pot satisfies the geometric checker, so Binary GRPO treats
it as equivalent to a clean lift-and-place. RL-ZVP may further reinforce this shortcut through
policy confidence without assessing execution quality. Prism-GRPO instead ranks successful
trajectories by non-target contact, preserving exploration while favoring cleaner solutions. This
reduces the shove-cheat rate to nearly zero and highlights execution quality as a natural signal
available from robot trajectories.

\subsection{Sensitivity to Quality Weight}
\label{sec:ablations}
\paragraph{Meaningful quality is robust to $\lambda$, while random quality is not.}
We vary $\lambda$ to test weight sensitivity and success dominance, and compare with random quality to verify the value of meaningful signals. Results are in Table~\ref{tab:lambda}, with full curves in Appendix~\ref{app:g2}.
 Within the success-dominant regime, $\lambda<1$, Prism-GRPO
consistently reaches the target, saving $45$--$56\%$ of rollouts on \textit{Lift Pot} and
$41$--$48\%$ on \textit{Move Can Pot}, while improving quality by $11$--$15$ and $7$--$13$
points, respectively.
 Since $q\in[0,1]$, $\lambda<1$ preserves success dominance, whereas
$\lambda\ge1$ allows failures to tie or outrank successes and yields less stable performance.
The strong $\lambda=2$ result on
\textit{Lift Pot} is an outlier, as the same setting fails to reach the target on
\textit{Move Can Pot}. 
Figure~\ref{fig:main} shows that Random quality can improve success early. By making reward ties
unlikely, the random term revives all-failure groups that Binary GRPO would discard. Under GRPO
clipping bias~\citep{shao2025spurious}, these updates may reinforce favored actions and increase
success. However, as the success rate approaches $0.5$, mixed-outcome groups become common and this
tie-breaking benefit largely vanishes. Because random quality provides no success- or
quality-aligned signal, its gains are unstable and yield little execution-quality improvement.

 \begin{table}[t]
\centering
\small
\setlength{\tabcolsep}{3pt}
\begin{tabular}{llccccc|ccccc}
\toprule
& &
\multicolumn{5}{c|}{Random}
&
\multicolumn{5}{c}{Prism-GRPO} \\
\cmidrule(lr){3-7}\cmidrule(lr){8-12}
Task & Metric
& 0.2 & 0.5 & 0.9 & 1 & 2
& 0.2 & 0.5 & 0.9 & 1 & 2 \\
\midrule

\multirow{2}{*}{Lift Pot}
& CS
& \xmark & \xmark & \xmark & \xmark & \xmark
& \textbf{56} & 49 & 45 & 25 & 53 \\
& Q
& \xmark & \xmark & \xmark & \xmark & \xmark
& 11 & 14 & 15 & 14 & 26 \\

\midrule

\multirow{2}{*}{\begin{tabular}{@{}c@{}}Move \\Can Pot\end{tabular}}
& CS 
 & 24  & 33  & \xmark  &  \xmark  & \xmark 
& \textbf{41}  & 48 & 41  & 3  & \xmark \\
& Q
& 1 & 0 & \xmark  & \xmark  & \xmark 
& 7 & 10 & 13 & 16 & \xmark \\

\bottomrule
\end{tabular}
\caption{\textbf{Quality-weight $\lambda$ ablation.}
CS is the rollout saving (\%) to match Binary GRPO's calibrated success; Q is the quality gain
at that point. \(\boldsymbol{\times}\) denotes an unreached success.}
\label{tab:lambda}
\end{table}

% \paragraph{The random reward gains early but cannot sustain it.}
% Across the plots, the random reward often improves faster than the binary baselines at the
% beginning, but the gain fades and the curve does not settle at a stable high success rate.
% This happens because early training is dominated by all-failure groups: binary GRPO discards
% them, while random rewards create nonzero advantage spread and therefore let more groups
% contribute updates. This combines with the GRPO clipping bias observed by
% \citet{shao2025spurious}, where even an uninformative reward can reinforce behaviors already
% likely under the policy. As binary GRPO begins to find successes, mixed-outcome groups become
% more common and the random reward loses this rollout-efficiency advantage. Since the random
% signal is not aligned with success or quality, it provides no reliable direction after the early
% stage, so its improvement is short-lived.

\begin{figure*}[t]
    \centering
    \includegraphics[width=\linewidth]{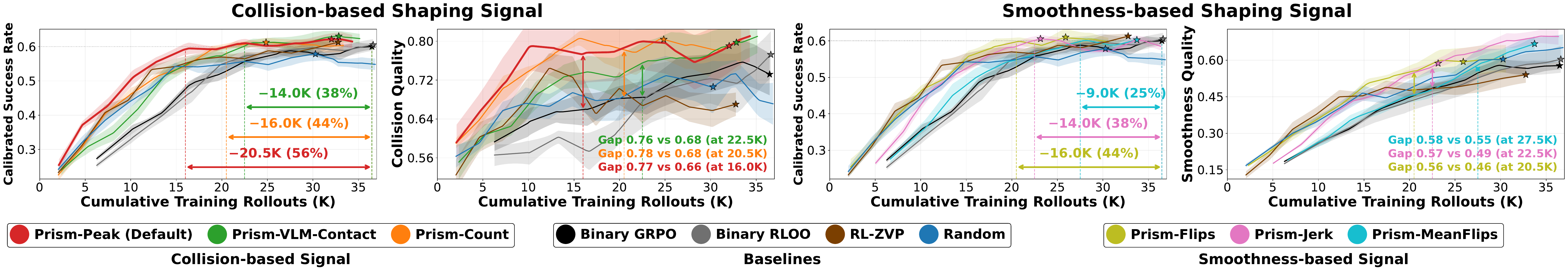}
\caption{\textbf{Quality-source ablation.} Evaluated on \textit{Lift Pot}.}
    \label{fig:ablations}
\end{figure*}

\begin{figure}[t]
    \centering
    \includegraphics[width=\linewidth]{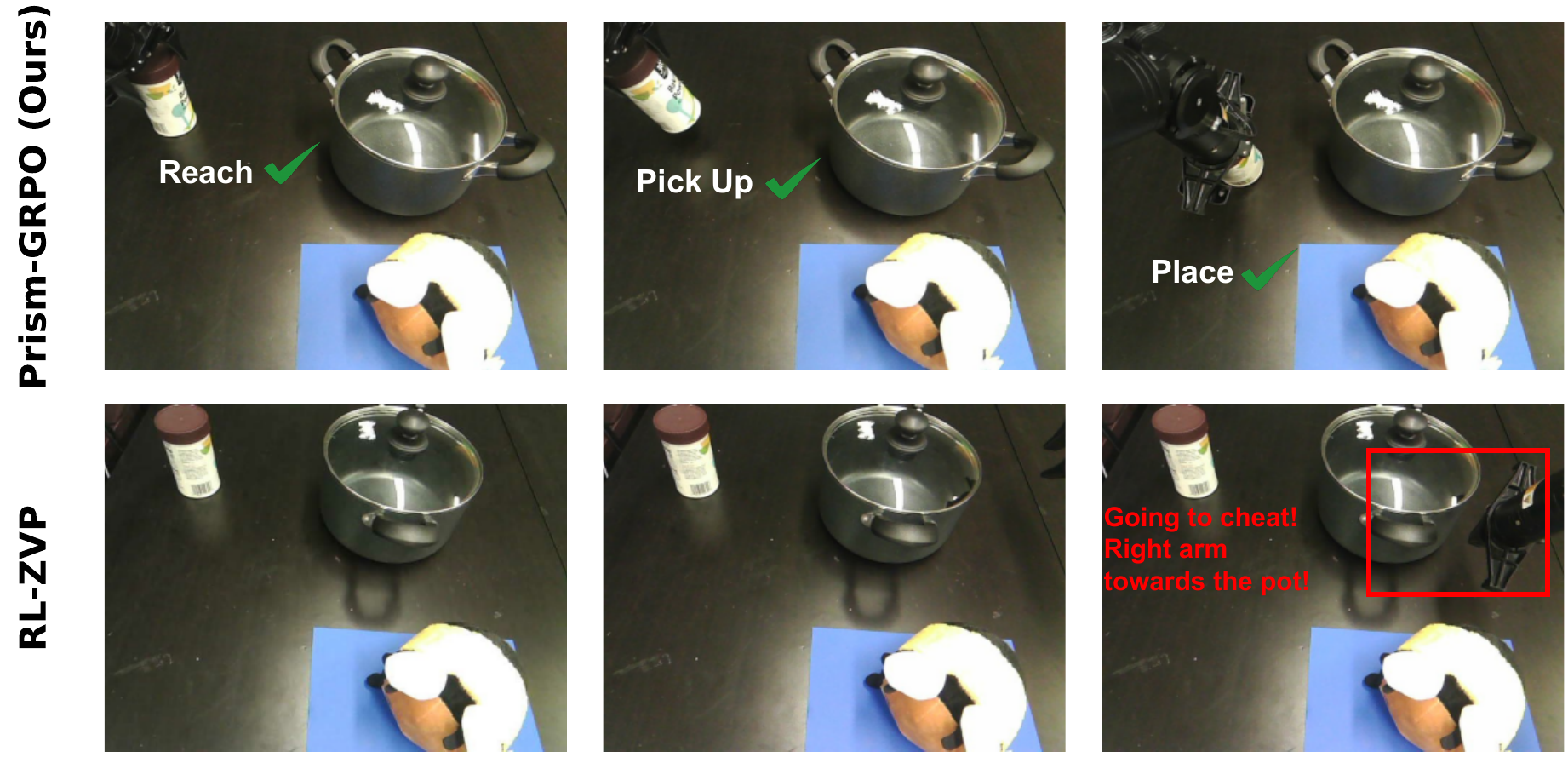}
    \caption{\textbf{Real-world behaviors.} Top: clean placement from Prism-GRPO (Ours). Bottom: shove-cheat from RL-ZVP.}
    \label{fig:real_examples}
\end{figure}

\subsection{Generalization Across Quality Sources}
We test Prism-GRPO across quality sources, treating quality as any observable trajectory signal
aligned with success.

\paragraph{Quality-source generalization.}
Figure~\ref{fig:ablations} evaluates collision cleanliness and motion smoothness using signals from
different sources. For collision quality, \textbf{Prism-Peak (Default)} uses the strongest force
applied to any non-target object, \textbf{Prism-Count} uses the number of such contacts, and
\textbf{Prism-VLM-Contact} replaces simulator contact data with a zero-shot Qwen3-VL judge to
emulate settings in which such data are unavailable. These signals save $38$--$56\%$ of rollouts
while improving collision quality. Prism-VLM-Contact yields the weakest gains among them,
consistent with its imperfect agreement with ground-truth contact labels (F1 $\approx 0.765$).
For motion quality, \textbf{Prism-Flips} counts arm-joint reversals, \textbf{Prism-Jerk} measures
the largest abrupt action change, and \textbf{Prism-MeanFlips} averages reversal counts across
action chunks. These signals save $25$--$44\%$ of rollouts while improving motion quality. See
Appendix~\ref{app:quality_signal} for details. 
Overall, Prism-GRPO is not tied to a particular metric or source: a single trajectory can expose diverse quality signals that distinguish same-outcome rollouts.

\paragraph{Quality adds little training overhead.}
Adding trajectory quality incurs little relative cost to training. Simulator- and VLM-derived quality add only 0.04\% and 4.8\% wall-clock time, respectively, Rollout generation cost remains dominant.
% Adding trajectory quality incurs little extra cost relative to training. Simulator-derived quality
% adds only $0.44$ ms per trajectory, totaling $15.9$ s, or $0.04\%$ of a 60-step run.
% VLM-Predicted is costlier at $70.7$ ms per trajectory, adding about $47$ minutes, or $4.8\%$ of
% wall-clock time, because it requires hosted inference. In both cases, rollout generation remains
% the dominant training cost.

\subsection{Real-World Evaluation}
Simulation may assign the same success label to behaviors with very different execution quality.
The shove-cheat on \textit{Move Can Pot} provides a clear test case: both clean placement and
shoving can succeed in simulation, allowing us to examine whether the higher-quality behavior
favored by Prism-GRPO transfers more reliably to hardware at comparable success.
We evaluate Binary GRPO, RL-ZVP, and Prism-GRPO on Piper over 25 trials each.

\paragraph{Execution quality improves sim-to-real reliability.}
The real-world rollouts clarify why this shortcut emerges. In simulation, the appropriate arm
depends on the can's position relative to the pot: the right arm acts when the can is on the right,
whereas the left arm should act when it is on the left. In some left-side scenes, however, the
policy reuses the learned right-arm pickup motion and strikes the intervening pot. A strong
simulated contact can displace the pot unrealistically far, allowing the shove to satisfy the
geometric success checker without executing the intended left-arm manipulation. On hardware, the
same impact produces much less displacement because of real contact dynamics and safety-limited
actuation, so these low-quality simulator successes often fail after deployment.
Figure~\ref{fig:real_examples} contrasts an RL-ZVP
shove-cheat with a clean Prism-GRPO placement. Consistent with simulation, RL-ZVP cheats most
often, Binary GRPO occasionally, and Prism-GRPO not at all while maintaining comparable clean
success (Table~\ref{tab:real_results}). Setup details are in Appendix~\ref{app:real}.

\begin{table}[t]
    \centering
    \small
    \setlength{\tabcolsep}{6pt}
    \begin{tabular}{lccc}
        \toprule
        Metric & Binary GRPO & RL-ZVP & Prism-GRPO \\
        \midrule
        Clean Success & 4/25 (16\%) & 2/25 (8\%) & \textbf{6/25 (24\%)} \\
        Shove-Cheat   & 1/25 (4\%)  & 5/25 (20\%) & \textbf{0/25 (0\%)} \\
        \bottomrule
    \end{tabular}
    \caption{\textbf{Real-world results.}
    Clean Success counts trials that move the can beside the pot without using the shove-cheat.}
    \label{tab:real_results}
\end{table}

\section{Discussion and Conclusion}
\label{sec:conclusion}

\paragraph{Positioning.}
To our knowledge, Prism-GRPO is the first VLA RL method to improve sample efficiency by recovering training signal from same-outcome groups. PolicyTrim and Z-1 instead target deployment and rollout-generation efficiency~\citep{wang2026policytrim,cao2026z}. RoboReward can break reward ties using continuous scores, but requires training an additional reward model~\citep{lee2026roboreward}. SmoothVLA improves motion quality among successful trajectories but does not target sample efficiency~\citep{li2026smoothvla}. TGRPO and GiGPO study credit assignment~\citep{chen2025tgrpo,feng2026group}, whereas RLinf-VLA and VLAJS focus on scalable training and guided exploration~\citep{zang2025rlinf,moroncelli2026vision}. These directions are complementary, but none directly recovers training signal from rollouts lost to same-outcome groups.

\paragraph{Limitations and conclusion.}
Prism-GRPO requires an observable trajectory signal aligned with success. Such signals are readily
available in robotics through contacts, actions, and vision, but may require proxies or external
verifiers in other domains. Theoretically, Prism-GRPO never increases the rollout cost of obtaining
an informative group and, when the population-level success and quality gradients align, its update
improves both objectives. Verifying this alignment directly in full VLA models remains challenging.
Overall, Prism-GRPO recovers degenerate groups while preserving success dominance, improving
sample efficiency and execution quality across tasks, signals, and real-world deployment.

% \paragraph{Limitations.}
% % Prism-GRPO requires an observable trajectory signal aligned with successful execution. Robotics
% % naturally exposes such signals through contacts, forces, actions, and visual observations, but
% % reliable quality signals may be harder to obtain in other domains. In LLM reasoning, intermediate
% % quality is often latent and must be inferred from confidence proxies, process labels, or external
% % verifiers, substantially limiting the available signal choices and sources.
% Prism-GRPO requires an observable trajectory signal aligned with success.
% Such signals are readily available in robotics via contacts, actions,
% and vision, but may require proxies or external verifiers in other domains.

% \paragraph{Conclusion.}
% We introduced Prism-GRPO, which uses trajectory quality to recover degenerate groups and improve
% sample efficiency and execution quality while preserving success dominance. Its effectiveness
% across signal types, tasks, and real-world evaluation highlights the value of trajectory evidence
% beyond final outcomes.

\section*{Acknowledgments}
Zeyun Deng at the DT Lab, Purdue University, would like to thank her advisor Ziran Wang and co-advisor Ruqi Zhang for their support in approving her summer internship. She also thanks Kai Cheng and Zhengyuan Li from Purdue University for helpful discussions. 
The authors would also like to sincerely thank the UCLA Mobility Lab for providing access to the robot platform and  providing the basic camera synchronization code used in our real-world experiments, which significantly saved our development time.

\newpage
\newpage
\bibliography{AnonymousSubmission2027}

\newpage
% appendix.tex

\appendix

\section{Proofs}

\subsection{Proof of Theorem~\ref{thm:cost}}
\label{app:cost}

\begin{proof}
Fix a scene with success rate $p\in(0,1)$ and group size $G\ge2$. Let $t_{\mathrm{succ}}$ be the
probability that all $G$ quality scores coincide given that all $G$ trajectories succeed, and let
$t_{\mathrm{fail}}$ be the analogous probability given that all $G$ fail; both lie in $[0,1]$.

Under the binary reward every trajectory scores $0$ or $1$, so a group is degenerate exactly when
all $G$ trajectories succeed or all $G$ fail. By independence this has probability
$p^{G}+(1-p)^{G}$, and a group is informative with probability $1-p^{G}-(1-p)^{G}$. Dynamic
sampling draws fresh groups until the first informative one, so the number of groups is geometric
with that success probability. At $G$ rollouts per group,
\[
C_{\mathrm{binary}}(p)=\frac{G}{1-p^{G}-(1-p)^{G}} .
\]

Under $R_{\mathrm{combined}}=\mathrm{success}+\lambda q$ with $0<\lambda<1$, the success and
failure reward ranges $[1,1+\lambda]$ and $[0,\lambda]$ are disjoint, so a mixed-outcome group can
never have all rewards equal. A combined-reward group is therefore degenerate only if all
trajectories share an outcome and, given that outcome, all $G$ quality scores coincide, giving
\begin{equation}
\label{eq:delta-def}
\delta(p)=p^{G}t_{\mathrm{succ}}+(1-p)^{G}t_{\mathrm{fail}} .
\end{equation}
The same geometric argument yields $C_{\mathrm{combined}}(p)=G/(1-\delta(p))$.

The combined cost never exceeds the binary one. Since $t_{\mathrm{succ}},t_{\mathrm{fail}}\le1$,
each term of \eqref{eq:delta-def} is at most the corresponding binary term, so
$\delta(p)\le p^{G}+(1-p)^{G}$. A smaller degeneracy probability means a larger informative
fraction $1-\delta(p)$, and hence fewer groups drawn:
$C_{\mathrm{combined}}(p)\le C_{\mathrm{binary}}(p)$, which is \eqref{eq:cost}.

The gap becomes extreme at the boundary. As $p\to1$, the binary informative fraction
$1-p^{G}-(1-p)^{G}\sim G(1-p)\to0$, so almost every group is degenerate and
$C_{\mathrm{binary}}(p)\to+\infty$; the same holds as $p\to0$. The combined cost does not blow up.
Provided quality is not almost surely constant within an outcome, so that
$t_{\mathrm{succ}},t_{\mathrm{fail}}<1$, the quality term breaks most ties, keeping
$\delta(p)\le\max(t_{\mathrm{succ}},t_{\mathrm{fail}})<1$ and $C_{\mathrm{combined}}$ finite. Its
limits are $G/(1-t_{\mathrm{succ}})$ as $p\to1$ and $G/(1-t_{\mathrm{fail}})$ as $p\to0$: at each
boundary only one kind of degenerate group remains, since an all-failure group has probability
$(1-p)^{G}\to0$ as $p\to1$ and an all-success group has probability $p^{G}\to0$ as $p\to0$.

\end{proof}

\subsection{Proof of Proposition~\ref{prop:align}}
\label{app:align}

\begin{proof}
The change in an objective along a direction $w$ is its directional derivative
$\langle\nabla_\theta(\cdot),w\rangle$. Since $\nabla_\theta J=g_{\mathrm{success}}$ and
$\nabla_\theta Q=g_{\mathrm{quality}}$, substituting
$w=g_{\mathrm{combined}}=g_{\mathrm{success}}+\lambda g_{\mathrm{quality}}$ and expanding gives
\begin{equation}
\label{eq:ascent}
\begin{aligned}
\langle \nabla_\theta J,\,g_{\mathrm{combined}}\rangle
&=\|g_{\mathrm{success}}\|^{2}
+\lambda\langle g_{\mathrm{success}},g_{\mathrm{quality}}\rangle,\\
\langle \nabla_\theta Q,\,g_{\mathrm{combined}}\rangle
&=\lambda\|g_{\mathrm{quality}}\|^{2}
+\langle g_{\mathrm{success}},g_{\mathrm{quality}}\rangle .
\end{aligned}
\end{equation}
By the alignment hypothesis, the cross term
$\langle g_{\mathrm{success}},g_{\mathrm{quality}}\rangle$ is non-negative, so the first
derivative is at least $\|g_{\mathrm{success}}\|^{2}$ and the second at least
$\lambda\|g_{\mathrm{quality}}\|^{2}$, each strictly positive when the corresponding gradient is
nonzero.
\end{proof}

\subsection{Proof of Proposition~\ref{prop:corr-to-align}}
\label{app:corr-to-align}

Throughout this appendix we abbreviate $g_S=g_{\mathrm{success}}$ and
$g_Q=g_{\mathrm{quality}}$, and for a positive definite $A$ we write
\begin{equation}
\label{eq:metric-cosine-def}
\cos_A(w,z)=\frac{w^\top A z}{\sqrt{w^\top A w}\,\sqrt{z^\top A z}}
\end{equation}
for the $A$-metric cosine between nonzero vectors $w,z$, so that $\cos_I$ is the ordinary
Euclidean cosine.

\begin{lemma}[Metric cosine under bounded conditioning]
\label{lem:metric-cosine}
Let $A\succ0$ be self-adjoint with eigenvalues in $[m,M]$ and condition number $\kappa=M/m$.
For nonzero $w,z$ with Euclidean cosine $\rho=\cos_I(w,z)$,
\begin{equation}
\label{eq:h-bound}
\begin{aligned}
\cos_A(w,z)&\ \ge\ h_\kappa(\rho),\\
h_\kappa(\rho)&=\frac{(\kappa+1)\rho-(\kappa-1)}{(\kappa+1)-(\kappa-1)\rho}.
\end{aligned}
\end{equation}
In particular, if $\rho\ge(\kappa-1)/(\kappa+1)$ then $w^\top A z\ge0$.
\end{lemma}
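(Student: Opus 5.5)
The plan is to reduce everything to the polarization identity and compare $A$-norms with Euclidean norms using $m\|y\|^2\le y^\top A y\le M\|y\|^2$. Since $\cos_A$ and $\cos_I$ are both invariant under positive rescaling of $w$ and $z$ separately, I first normalize $\|w\|=\|z\|=1$, so that $\|w+z\|^2=2(1+\rho)$ and $\|w-z\|^2=2(1-\rho)$. Set $X=(w+z)^\top A(w+z)$ and $Y=(w-z)^\top A(w-z)$. Then
\[
w^\top A z=\tfrac14(X-Y),\qquad w^\top A w+z^\top A z=\tfrac12(X+Y),
\]
and the eigenvalue bounds give $X\ge 2m(1+\rho)$ and $Y\le 2M(1-\rho)$.

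\textbf{Main case, $X\ge Y$.} Here the numerator is nonnegative. By AM--GM, $\sqrt{w^\top A w\, z^\top A z}\le \tfrac14(X+Y)$, and dividing a nonnegative numerator by something no larger than $\tfrac14(X+Y)$ only increases the ratio. Hence
\[
\cos_A(w,z)\ \ge\ \frac{X-Y}{X+Y}=\frac{1-r}{1+r},\qquad r=\frac{Y}{X}.
\]
The map $r\mapsto(1-r)/(1+r)$ is decreasing, and $r\le\kappa(1-\rho)/(1+\rho)$. Substituting this bound and clearing denominators gives exactly
\[
\frac{(1+\rho)-\kappa(1-\rho)}{(1+\rho)+\kappa(1-\rho)}=h_\kappa(\rho).
\]
The "in particular" clause follows directly. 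If $\rho\ge(\kappa-1)/(\kappa+1)$, then $(1+\rho)\ge\kappa(1-\rho)$, so $X\ge 2m(1+\rho)\ge 2M(1-\rho)\ge Y$. Therefore $w^\top A z=\tfrac14(X-Y)\ge0$ without any further work. This is the only part Proposition~\ref{prop:corr-to-align} actually needs.

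\textbf{Hard case, $X<Y$.} Here $\cos_A<0$, and the AM--GM step points the wrong way, because a smaller denominator makes a negative ratio more negative. So I cannot keep the Euclidean normalization and must choose a different one. I would rescale $z$ so that $w^\top A w=z^\top A z$. This makes the AM--GM step an equality, so $\cos_A=(X-Y)/(X+Y)$ exactly. The cost is that $w$ and $z$ now have unequal Euclidean lengths $\alpha,\beta$, and the ratio bound becomes
\[
\frac{Y}{X}\le\kappa\,\frac{\alpha^2+\beta^2-2\alpha\beta\rho}{\alpha^2+\beta^2+2\alpha\beta\rho}.
\]
Plugging in directly is too lossy, since the two eigenvalue bounds cannot both be tight while $\alpha\neq\beta$ is forced.

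Instead I would reduce to dimension two. Restrict $A$ to $\mathrm{span}\{w,z\}$; its compression still has eigenvalues in $[m,M]$, so it suffices to treat a $2\times2$ matrix with eigenvalues $\mu_1\le\mu_2$, $\mu_2/\mu_1\le\kappa$. I would then parametrize $w,z$ as unit vectors at angles $\phi\pm\tfrac12\arccos\rho$ relative to the eigenbasis and minimize $\cos_A$ over $\phi$ and the eigenvalue ratio. The expected outcome is that the minimum is attained with ratio exactly $\kappa$ and $w,z$ placed symmetrically about the bisector of the eigenvectors, the classical Wielandt extremal configuration. A direct computation at that configuration should again give $h_\kappa(\rho)$.

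This two-dimensional extremal calculation is the step I expect to be the main obstacle, specifically checking that the symmetric configuration is the global minimizer. If it becomes unwieldy, I would fall back to citing the generalized Wielandt inequality, of which \eqref{eq:h-bound} is the standard form.
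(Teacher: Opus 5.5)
\textbf{Verdict: genuine gap.} The proposal proves the lemma only when $w^\top Az\ge0$; the case $w^\top Az<0$ is left unproved.

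\textbf{What is proved.} Your treatment of the case $w^\top Az\ge0$ is correct. Your direct proof of the final clause, $X\ge 2m(1+\rho)\ge 2M(1-\rho)\ge Y$, is valid and simpler than the paper's route, since it needs no compression. You are also right that this clause is all the implication in Proposition~\ref{prop:corr-to-align} uses.

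\textbf{Where the gap is.} The lemma asserts $\cos_A(w,z)\ge h_\kappa(\rho)$ for \emph{all} nonzero $w,z$, and for $w^\top Az<0$ you have only a plan. Your $A$-balanced rescaling in fact already settles $\rho\le0$: with $t=2\alpha\beta/(\alpha^2+\beta^2)\in(0,1]$ one has $(1-t\rho)/(1+t\rho)\le(1-\rho)/(1+\rho)$ there. So the genuinely open regime is $0<\rho<(\kappa-1)/(\kappa+1)$, and neither of your two fallbacks works there as stated.

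First, the configuration you predict is not the minimizer. Taking $w,z$ symmetric about the bisector of the eigenvectors gives, at $\rho=0$, $w,z$ equal to the eigenvectors themselves. Then $\cos_A=0$ rather than $h_\kappa(0)=-(\kappa-1)/(\kappa+1)$. The extremal pair instead has $w+z$ along the smallest-eigenvalue eigenvector and $w-z$ along the largest, as in the paper's sharpness construction.

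Second, the generalized Wielandt inequality as usually stated bounds $|x^\top Ay|$ under a bound on $|x^\top y|$. A sign-blind statement of this kind must also cover pairs with Euclidean cosine $-\rho$. So for $\rho>0$ it can give at best $\cos_A\ge h_\kappa(-\rho)$, which is strictly weaker than $h_\kappa(\rho)$ exactly in the open regime.

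\textbf{The missing idea.} Replace AM--GM, which is only an inequality and points the wrong way when the numerator is negative, by an exact identity. After compressing to $\mathrm{span}\{w,z\}$ as you propose, let $\mu_1\le\mu_2$ be the compression's eigenvalues. For unit $w,z$, the $A$-Gram determinant equals the compression's determinant times the Euclidean one:
$(w^\top Aw)(z^\top Az)-(w^\top Az)^2=\mu_1\mu_2(1-\rho^2)$.
Hence $\cos_A=s/\sqrt{s^2+\mu_1\mu_2(1-\rho^2)}$ with $s=w^\top Az$. This is an increasing function of $s$ alone (for $\rho\in(-1,1)$; $\rho=\pm1$ is trivial), so no optimization over the angle is needed.

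Your own bounds, with $\mu_1,\mu_2$ in place of $m,M$, give $s=\tfrac14(X-Y)\ge\tfrac12(a-b)$ for $a=\mu_1(1+\rho)$ and $b=\mu_2(1-\rho)$. Since $ab=\mu_1\mu_2(1-\rho^2)$ and $\tfrac14(a-b)^2+ab=\tfrac14(a+b)^2$, this yields $\cos_A\ge(a-b)/(a+b)=h_{\mu_2/\mu_1}(\rho)\ge h_\kappa(\rho)$, regardless of sign. The last step uses that $h_\kappa(\rho)$ decreases in $\kappa$.

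This is exactly the paper's proof in other coordinates. It writes $w=ce+\sigma f$ and $z=ce-\sigma f$ with $e\parallel w+z$ and $f\parallel w-z$, so your $X,Y$ are $4c^2E$ and $4\sigma^2F$. It then uses $EF-G^2=\det A$ to get $\cos_A=(E-rF)/\sqrt{(E-rF)^2+4r\kappa'}$, and bounds $E-rF\ge1-r\kappa'$.
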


\begin{proof}
The left side of \eqref{eq:h-bound} depends only on $\mathrm{span}\{w,z\}$. If $\rho=\pm1$ then
$w,z$ are collinear and $\cos_A(w,z)=\rho$, so the claim holds. Assume $\rho\in(-1,1)$, making
the span two-dimensional. Each of $w^\top Az$, $w^\top Aw$, $z^\top Az$ is unchanged when $A$ is
replaced by its compression $PAP$ to this span, with $P$ the orthogonal projection onto it. By
Rayleigh--Ritz the compression has eigenvalues in $[m,M]$, hence condition number
$\kappa'\le\kappa$. Rescaling $A$ leaves $\cos_A$ unchanged, so we may take
$A=\mathrm{diag}(1,\kappa')$ on this span.

Normalize $\|w\|=\|z\|=1$. Let $e\parallel w+z$ and $f\parallel w-z$ be unit vectors; they are
orthogonal, and
\begin{equation}
\label{eq:wz-decomp}
w=ce+\sigma f,\qquad z=ce-\sigma f,
\end{equation}
with $c=\sqrt{(1+\rho)/2}$, $\sigma=\sqrt{(1-\rho)/2}$, and
$r=\sigma^2/c^2=(1-\rho)/(1+\rho)$. Writing $E=e^\top Ae$, $F=f^\top Af$, $G=e^\top Af$, a direct
calculation and $EF-G^2=\det(A)=\kappa'$ give
\begin{equation}
\label{eq:cos-EF}
\begin{aligned}
\cos_A(w,z)
&=\frac{E-rF}{\sqrt{(E+rF)^2-4rG^2}}\\
&=\frac{E-rF}{\sqrt{(E-rF)^2+4r\kappa'}} .
\end{aligned}
\end{equation}
Let $\theta$ be the angle between $e$ and the eigenvector of $A$ with eigenvalue $1$, so
$E=\cos^2\theta+\kappa'\sin^2\theta$ and $F=\sin^2\theta+\kappa'\cos^2\theta$. As $\theta$ ranges
over $[0,\pi)$ the pair $(w,z)$ ranges over all configurations with the prescribed norms and
$w^\top z=\rho$, so minimizing over $\theta$ minimizes over all such pairs. Then
\[
E-rF=(1-r\kappa')\cos^2\theta+(\kappa'-r)\sin^2\theta\ \ge\ 1-r\kappa',
\]
since $(\kappa'-r)-(1-r\kappa')=(\kappa'-1)(1+r)\ge0$. The map $t\mapsto t/\sqrt{t^2+4r\kappa'}$
is increasing on $\mathbb R$, and $(1-r\kappa')^2+4r\kappa'=(1+r\kappa')^2$, so from
\eqref{eq:cos-EF},
\begin{equation}
\label{eq:cos-lower}
\cos_A(w,z)\ \ge\ \frac{1-r\kappa'}{1+r\kappa'}=h_{\kappa'}(\rho),
\end{equation}
where the last equality follows by substituting $r=(1-\rho)/(1+\rho)$. Finally $h_\kappa(\rho)$
is decreasing in $\kappa$ for $\rho\in(-1,1)$, since
\[
\frac{\partial h_\kappa(\rho)}{\partial\kappa}
=-\frac{2(1-\rho^2)}{\big((\kappa+1)-(\kappa-1)\rho\big)^2}<0 ,
\]
so $\kappa'\le\kappa$ gives $h_{\kappa'}(\rho)\ge h_\kappa(\rho)$, proving \eqref{eq:h-bound}.
The final implication holds because the denominator of $h_\kappa(\rho)$ is positive for
$\rho<1$, $\kappa\ge1$, so $h_\kappa(\rho)\ge0$ iff $\rho\ge(\kappa-1)/(\kappa+1)$.
\end{proof}

\begin{proof}[Proof of Proposition~\ref{prop:corr-to-align}]
Fix the state $s$, write $\pi=\pi_\theta(\cdot\mid s)$ and $D=\mathrm{diag}(\pi)$, and center the
action values as $\tilde u=u-\bar u\mathbf 1$ and $\tilde v=v-\bar v\mathbf 1$, where
$\bar u=\sum_k\pi_k u_k$ and $\bar v=\sum_k\pi_k v_k$. For a single softmax decision the logit
gradients are $g_S=D\tilde u$ and $g_Q=D\tilde v$, so
$\langle g_S,g_Q\rangle=\tilde u^\top D^2\tilde v$.

Put $w=D^{1/2}\tilde u$ and $z=D^{1/2}\tilde v$. Since $\tilde u,\tilde v$ are $\pi$-centered,
$w$ and $z$ lie in the centered subspace
\begin{equation}
\label{eq:Hpi}
\mathcal H_\pi=\{y:\langle y,(\sqrt{\pi_1},\dots,\sqrt{\pi_K})\rangle=0\} ,
\end{equation}
and the policy-weighted correlation is the Euclidean cosine of $w$ and $z$, that is
$\rho(s)=\cos_I(w,z)$.

Let $P_\pi$ be the Euclidean projection onto $\mathcal H_\pi$ and set
$A_\pi=P_\pi DP_\pi|_{\mathcal H_\pi}$. Although $D$ need not preserve $\mathcal H_\pi$, for
$w,z\in\mathcal H_\pi$ we have $w^\top A_\pi z=w^\top Dz$, and therefore
\begin{equation}
\label{eq:Api-identities}
\begin{aligned}
w^\top A_\pi z &=\tilde u^\top D^2\tilde v=\langle g_S,g_Q\rangle,\\
w^\top A_\pi w &=\|g_S\|^2,
\qquad
z^\top A_\pi z =\|g_Q\|^2 .
\end{aligned}
\end{equation}
So the gradient cosine equals the $A_\pi$-metric cosine, $\cos_{A_\pi}(w,z)$, and $\kappa_\pi$ is
by definition the condition number of $A_\pi$ on $\mathcal H_\pi$. Applying
Lemma~\ref{lem:metric-cosine} with $A=A_\pi$,
\begin{equation}
\label{eq:grad-cos-bound}
\frac{\langle g_S,g_Q\rangle}{\|g_S\|\,\|g_Q\|}\ \ge\ h_{\kappa_\pi}(\rho(s)).
\end{equation}
If $\rho(s)\ge(\kappa_\pi-1)/(\kappa_\pi+1)$ then $h_{\kappa_\pi}(\rho(s))\ge0$, and since the
gradient norms are positive under the non-degeneracy assumptions, $\langle g_S,g_Q\rangle\ge0$.

For sharpness, when $\dim\mathcal H_\pi\ge2$ let $e_{\min},e_{\max}$ be orthonormal eigenvectors
of $A_\pi$ with eigenvalues $m_\pi,M_\pi$. Given $\rho\in(-1,1)$, set
\begin{equation}
\label{eq:sharp-wz}
\begin{aligned}
w &=\sqrt{\tfrac{1+\rho}{2}}\,e_{\min}+\sqrt{\tfrac{1-\rho}{2}}\,e_{\max},\\
z &=\sqrt{\tfrac{1+\rho}{2}}\,e_{\min}-\sqrt{\tfrac{1-\rho}{2}}\,e_{\max},
\end{aligned}
\end{equation}
so $\|w\|=\|z\|=1$ and $w^\top z=\rho$. Then
$w^\top A_\pi z=m_\pi\tfrac{1+\rho}{2}-M_\pi\tfrac{1-\rho}{2}$ and
$w^\top A_\pi w=z^\top A_\pi z=m_\pi\tfrac{1+\rho}{2}+M_\pi\tfrac{1-\rho}{2}$, whose ratio is
exactly $h_{\kappa_\pi}(\rho)$. Setting $\tilde u=D^{-1/2}w$ and $\tilde v=D^{-1/2}z$ gives
$\pi$-centered action values attaining the bound. Letting $\kappa_\pi\to\infty$ sends
$h_{\kappa_\pi}(\rho)\to-1$ for every $\rho<1$, so no nontrivial lower bound depending on $\rho$
alone can hold: bounded conditioning is necessary, not merely sufficient.
\end{proof}

\section{Quality}
\label{app:quality}
Prism-GRPO treats quality as a bounded trajectory-level score
$q(\tau)\in[0,1]$, with larger values indicating cleaner or smoother execution. Every Prism variant
uses the same success-dominant reward
\begin{equation}
    R(\tau)
    =
    \operatorname{success}(\tau)+\lambda q(\tau),
    \qquad
    \operatorname{success}(\tau)\in\{0,1\},
\end{equation}
and the same RLOO advantage estimator. The variants differ only in how $q(\tau)$ is measured.
This isolates the effect of the quality source while keeping the policy, success reward, and
optimization procedure fixed.

\subsection{Quality Signals}

\begin{figure}[t]
    \centering
    \includegraphics[width=\linewidth]{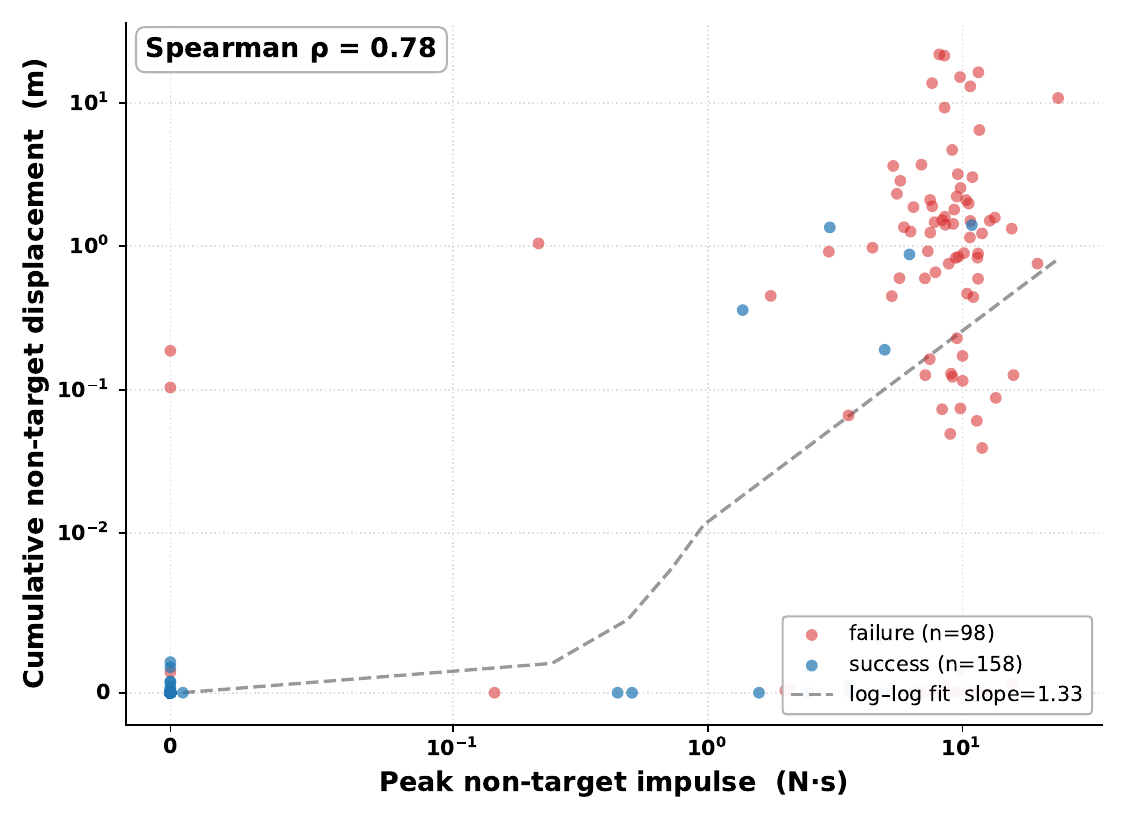}
\caption{\textbf{Peak non-target contact and object displacement on \textit{Lift Pot}.}
Stronger unintended contact is associated with larger displacement of non-target objects.}
    \label{fig:force-displacement}
\end{figure}
\label{app:quality_signal}
\paragraph{Collision signals.}
We seek task-agnostic collision signals that describe how cleanly a trajectory is executed rather
than how far the task has progressed. For each task, we designate the manipulated object as the
target object. For example, the target is the pot in \textit{Lift Pot} and the can in
\textit{Move Can Pot}. Contact with the target is required for manipulation and is therefore
excluded, while contact between the robot and any other object is treated as non-target contact.

\paragraph{Simulator-based collision signals.}
When simulator contact data are available, we derive two trajectory-level signals directly from
the contact log. Let $\mathcal{C}_{\mathrm{nt}}(\tau)$ denote the set of non-target contact events
in trajectory $\tau$, and let $I(c)$ denote the impulse of event $c$. We define
\begin{equation}
    I_{\mathrm{peak}}(\tau)
    =
    \max_{c\in\mathcal{C}_{\mathrm{nt}}(\tau)} I(c),
    \qquad
    N_{\mathrm{nt}}(\tau)
    =
    \left|\mathcal{C}_{\mathrm{nt}}(\tau)\right|,
\end{equation}
with both quantities set to zero when no non-target contact occurs.
\textbf{Prism-Peak (Default)}, referred to as \textbf{GT-Max Force} in the main paper, uses
$I_{\mathrm{peak}}(\tau)$ to capture collision severity. Stronger unintended contact is more likely
to displace surrounding objects and affect subsequent execution, as illustrated in
Figure~\ref{fig:force-displacement}. \textbf{Prism-Count} instead uses
$N_{\mathrm{nt}}(\tau)$ to capture collision frequency, assigning equal weight to each contact
event regardless of its magnitude. 

\paragraph{Prism-VLM-Contact.}
When simulator contact logs are unavailable, \textbf{Prism-VLM-Contact} estimates collisions
directly from visual observations. A zero-shot Qwen3-VL-235B judge receives eight temporally
ordered frames and determines whether the robot contacts any object other than the task target.
The prompt identifies the target, excludes its intended manipulation, and asks the judge to inspect
both changes in the scene layout and brief contacts in intermediate frames. We convert its output
into a binary quality signal indicating whether the trajectory is collision-free. On $256$
trajectories, the judge achieves $78.9\%$ accuracy and a contact-class F1 score of $69.0\%$,
with $90.9\%$ precision and $55.6\%$ recall. It correctly recognizes $142$ of $148$
collision-free trajectories but detects only $60$ of $108$ trajectories containing contact,
showing that it provides a conservative visual alternative to simulator-derived contact signals.

\paragraph{Full Lift Pot collision-judging prompt.}
\label{app:vlm_prompt_lift_pot}

\begin{quote}
\footnotesize
\raggedright

\textbf{Decision rule.}

Decide whether the robot \textbf{COLLIDED} with anything other than its target object at any point
during the trajectory.

\medskip
\noindent
In this task the target object is: \textbf{THE POT}.

\medskip
\noindent
Touching, gripping, closing on, lifting, holding, or moving \textbf{THE POT} is the task itself---it
is \textbf{NOT} a collision, no matter how firm. The pot rising in the air, the gripper closing
around the pot, and the pot leaving the table are all expected and are not counted.

\medskip
\noindent
\textbf{What counts as a collision:}
\begin{itemize}
    \item The robot arm or gripper visibly contacts the table, including pressing into it, sliding
    along it, or pushing against it.
    \item The robot contacts the wall behind it.
    \item The robot bumps, knocks, slides, tips, or pushes any distractor object that is not the pot.
    \item Any non-pot object visibly moves because of the robot.
    \item Even a brief, light touch that displaces a non-target object counts; there is no
    ``too gentle to count'' threshold.
\end{itemize}

\noindent
\textbf{How to read the images:}
\begin{itemize}
    \item You are given $N$ frames sampled in temporal order from a single trajectory. The first
    image is the start of the trajectory, before the robot moves, and the last image is the end,
    after the robot has stopped. The middle frames are evenly spaced snapshots in between.
    \item The frames are not independent. Read them as a short film. Anchor the judgment on the
    first frame, which shows the initial layout, and the last frame, which shows the final layout.
    Use the middle frames to identify contacts or displacements that occur only briefly.
\end{itemize}

\noindent
\textbf{How to look for collisions across frames:}
\begin{enumerate}
    \item Identify the pot and ignore its motion entirely because it is supposed to move.
    \item In the first frame, identify every other visible object, including the table surface,
    wall, and distractors. Note their positions, orientations, and stacking configurations as the
    reference state.
    \item Compare the last frame with the first frame for each non-target object:
    \begin{itemize}
        \item Has it shifted left, right, forward, or backward?
        \item Has it rotated, tipped, or fallen?
        \item Has it been knocked off the table or scattered?
    \end{itemize}
    Any visible displacement of a non-target object between the first and last frames indicates
    that a collision occurred during the trajectory.
    \item Inspect the middle frames in temporal order and follow the robot arm and gripper path.
    Determine whether they visibly intersect the table surface, wall, or space occupied by a
    distractor. Contact in a middle frame counts even if the non-target object returns to its
    original position by the end.
    \item Look for indirect evidence, including motion blur on a distractor, the gripper
    momentarily pressed flat against the table, or a distractor's shadow moving as the robot passes
    nearby.
\end{enumerate}

\noindent
\textbf{Anchors:}
\begin{itemize}
    \item[2 ---] \textbf{No collision:} Comparing the last frame with the first frame, all non-pot
    objects remain in the same positions and orientations, and the robot never visibly contacts
    the table, wall, or a distractor.
    \item[1 ---] \textbf{Collision:} At least one non-pot object visibly shifts, rotates, tips, or
    moves between the first and last frames, or the robot visibly contacts the table, wall, or a
    distractor in any frame.
\end{itemize}

\noindent
\textbf{Task, for context only---do not score task success:}

Use both arms to lift the pot.

\medskip
\noindent
Respond with the digit first, followed by a brief reason. Use exactly the following format:

\begin{quote}
\ttfamily
ANSWER: <1|2>\\
REASON: <one short sentence naming the specific frame(s) and object(s) that drove your decision>
\end{quote}

\end{quote}

\paragraph{Smoothness signals.}
We derive three task-agnostic smoothness signals from the executed action sequence, without
requiring simulator contact data. For the $14$-dimensional ALOHA action space, we use the twelve
arm-joint dimensions and exclude the two gripper dimensions, so intended opening and closing
motions are not treated as irregular behavior. Let $\mathcal{A}$ denote the set of arm joints,
$K_\tau$ the number of action chunks in trajectory $\tau$, and $f_j(\tau)$ the number of
velocity-direction reversals of joint $j$. Each signal captures a different form of irregular
motion.

\paragraph{Prism-Flips.}
This  captures localized oscillation, where a single unstable joint repeatedly
changes direction even when the remaining joints move smoothly. It uses the largest reversal count
among the arm joints:
\begin{equation}
    s_{\mathrm{flips}}(\tau)
    =
    \frac{1}{K_\tau}
    \max_{j\in\mathcal{A}} f_j(\tau).
    \label{eq:signal_flips}
\end{equation}
The signal is therefore determined by the most oscillatory joint in the trajectory.

\paragraph{Prism-MeanFlips.}
This captures reversals distributed across the entire arm rather than focusing
on a single joint. It averages the reversal count over all arm joints:
\begin{equation}
    s_{\mathrm{meanflips}}(\tau)
    =
    \frac{1}{K_\tau |\mathcal{A}|}
    \sum_{j\in\mathcal{A}} f_j(\tau).
    \label{eq:signal_meanflips}
\end{equation}
This signal reflects the overall level of oscillatory motion and is less sensitive to one isolated
noisy joint.

\begin{figure*}[t]
    \centering
    \includegraphics[width=\textwidth]{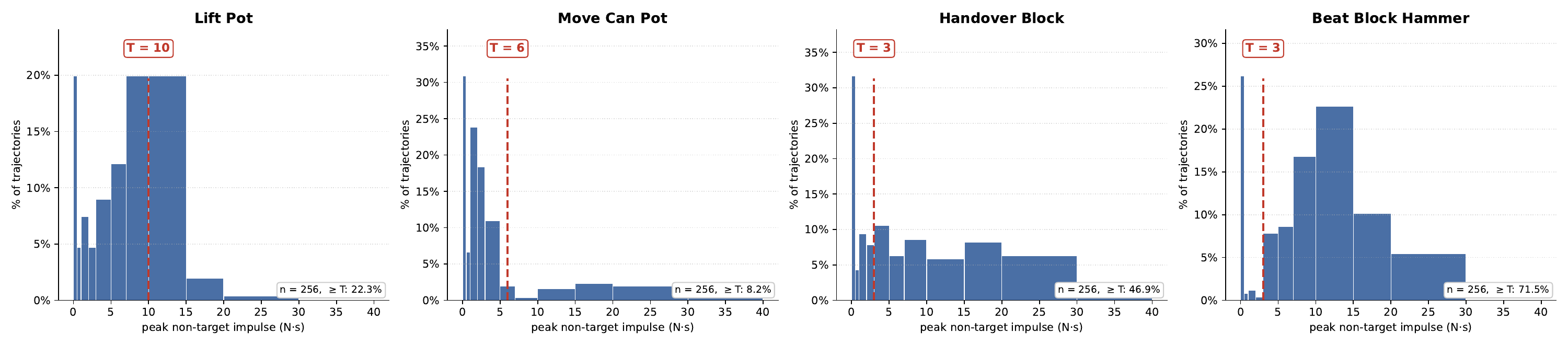}
    \caption{\textbf{Collision-threshold calibration.}
   Dashed lines mark the
    thresholds selected at the boundary between the low-impulse cluster and the high-impulse tail.}
    \label{fig:t_choice_collision}
\end{figure*}

\begin{figure*}[t]
    \centering
    \includegraphics[width=\textwidth]{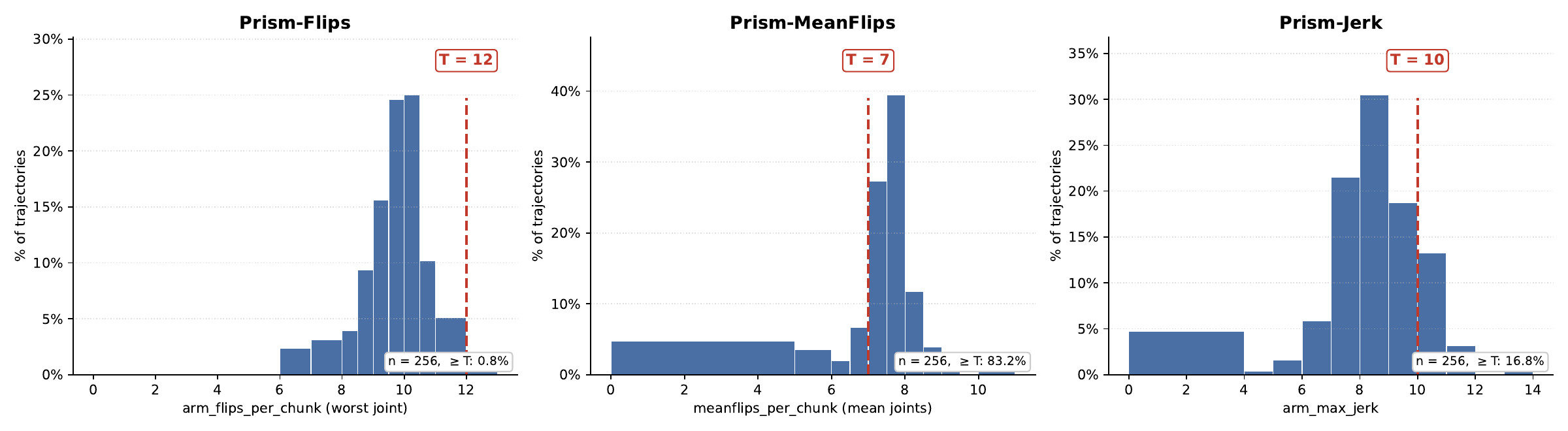}
    \caption{\textbf{Smoothness-threshold calibration on \textit{Lift Pot}.}}
    \label{fig:t_choice_smoothness}
\end{figure*}

\paragraph{Prism-Jerk.}
This captures abrupt changes in commanded arm motion rather than repeated
direction reversals. Let $\mathbf{a}^{\mathrm{arm}}_t$ denote the arm-joint action vector at
timestep $t$, and let $T_\tau$ denote the number of evaluated timesteps. We compute the average
magnitude of the third-order action difference:
\begin{equation}
    s_{\mathrm{jerk}}(\tau)
    =
    \frac{1}{T_\tau}
    \sum_{t=1}^{T_\tau}
    \left\lVert
        \Delta^3 \mathbf{a}^{\mathrm{arm}}_t
    \right\rVert_2.
    \label{eq:signal_jerk}
\end{equation}
Thus, Prism-Flips measures the most severe single-joint oscillation, Prism-MeanFlips measures
oscillation across the arm, and Prism-Jerk measures the average magnitude of abrupt action
changes. We evaluate the three signals separately.

\subsection{Metrics}
\label{app:quality_metrics}

\paragraph{Quality signals and normalization.}
We consider six trajectory-level quality signals. For collision,
\textbf{Prism-Peak (Default)}, referred to as \textbf{GT-Max Force} in the main paper, uses the
peak non-target contact impulse; \textbf{Prism-Count} uses the number of non-target contact events;
and \textbf{Prism-VLM-Contact} uses a zero-shot VLM to estimate non-target contacts from sampled
trajectory frames. For smoothness, \textbf{Prism-Flips} uses the largest velocity-sign reversal
count among the arm joints per action chunk, \textbf{Prism-MeanFlips} averages the reversal count
across all arm joints, and \textbf{Prism-Jerk} measures action jerk. 
We convert each non-negative raw cost $r(\tau)$ into a quality score in $[0,1]$:
\begin{equation}
    q(\tau)
    =
    \max\left(
        0,\,
        1-
        \frac{\max\!\left(0,r(\tau)-r_0\right)}{T}
    \right),
    \label{eq:quality_norm}
\end{equation}
where larger values indicate cleaner or smoother execution. The threshold $T$ determines the
normalization scale, while $r_0$ is an optional floor for motion that is inherently required by the
task. Collision signals always use $r_0=0$. For Prism-Flips on \textit{Move Can Pot} and
\textit{Beat Block Hammer}, we use $r_0=12$, estimated from the median flip count of successful
SFT trajectories. This removes the reversals naturally required by grasp-and-place and
lift-and-strike motions before measuring excess oscillation.

\paragraph{Calibration from the SFT policy.}
Before reinforcement learning, we begin with the SFT policy. For each task, we run this policy on
the same set of $256$ validation scenes and examine the resulting distribution of each raw quality
signal. We then choose $T$ from this distribution so that ordinary SFT trajectories retain useful
variation in quality, while clearly excessive collision or motion values receive low scores. Once
selected, the threshold is fixed for all subsequent RL runs and is not adjusted using RL
performance, individual seeds, or reported checkpoints.

\paragraph{Collision thresholds.}
For Prism-Peak, we examine the SFT distribution of peak non-target impulse for each task and place
$T_{\mathrm{peak}}$ at the boundary between the low-impulse cluster and the high-impulse tail.
The low-impulse region mainly contains gentle grazing or placement contacts, while the tail
corresponds to more severe unintended interactions such as arm slams or object tipping. This choice
preserves quality differences among relatively clean trajectories while assigning zero quality to
clearly excessive contacts. As shown in Figure~\ref{fig:t_choice_collision}, the resulting
thresholds are $10\,\mathrm{N\,s}$ for \textit{Lift Pot},
$6\,\mathrm{N\,s}$ for \textit{Move Can Pot}, and $3\,\mathrm{N\,s}$ for both
\textit{Handover Block} and \textit{Beat Block Hammer}. The distributions are computed after
applying the same target exemptions and grace filters used during training, including exempting
task-required placement contact in \textit{Handover Block} and contact with the manipulated hammer
in \textit{Beat Block Hammer}. For Prism-Count and cumulative-impulse quality, the SFT
distributions have comparable scales across tasks, so we use the shared thresholds $T_n=30$ and
$T_{\Sigma}=30$.

\paragraph{Smoothness thresholds.}
Figure~\ref{fig:t_choice_smoothness} shows the SFT distributions used to calibrate the three
smoothness thresholds on \textit{Lift Pot}. For Prism-Flips, we set
$T_{\mathrm{flips}}=12$ near the upper tail because the signal is determined by the most
oscillatory arm joint. For Prism-Jerk, we similarly set $T_J=10$ near the upper tail so that
ordinary variation in commanded motion retains positive quality while unusually abrupt motion is
penalized. Prism-MeanFlips instead averages reversals across all arm joints and has a more
concentrated distribution around typical behavior. Placing its threshold at the upper tail would
assign nearly all trajectories high quality and provide little distinction among them. We
therefore set $T_{\mathrm{meanflips}}=7$ near the SFT median, preserving useful variation in the
quality score. All three thresholds are selected before RL and remain fixed during training and
evaluation.

\paragraph{Robustness to threshold choice.}
The SFT-based calibration above provides a principled default for selecting $T$, but the method
should not depend critically on the exact placement of this threshold. We therefore test
Prism-Peak on \textit{Lift Pot} using both the calibrated value
$T_{\mathrm{peak}}=10\,\mathrm{N\,s}$ and a substantially smaller value,
$T_{\mathrm{peak}}=3\,\mathrm{N\,s}$. The latter changes the normalization scale by more than a
factor of three and clips a much larger portion of the impulse distribution to zero. Despite this
large change, Figure~\ref{fig:quality_threshold} shows similar learning behavior and rollout
savings under both settings. This result suggests that the calibration rule provides a reasonable
default, while Prism-GRPO remains effective over a broad range of threshold values and does not
require precise tuning of $T$.

\begin{figure*}[t]
    \centering
    \includegraphics[width=\textwidth]
    {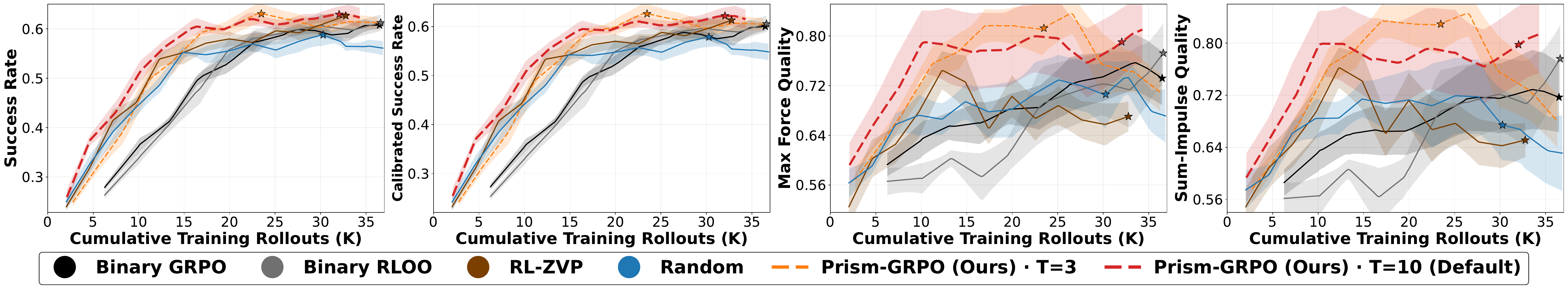}
    \caption{\textbf{Sensitivity to the quality threshold on \textit{Lift Pot}.}
    Prism-Peak shows similar performance for
    $T_{\mathrm{peak}}\in\{3,10\}\,\mathrm{N\,s}$, with
    $10\,\mathrm{N\,s}$ used as the default.}
    \label{fig:quality_threshold}
\end{figure*}

\section{Experimental Setup}
\label{app:setup}
The following configuration is shared by Prism-GRPO and all baselines.

\paragraph{Initialization.}
We build directly on SimpleVLA-RL~\citep{li2025simplevla}, whose full code and supervised
fine-tuning (SFT) checkpoints are publicly released. Every run in this paper, including Prism-GRPO
and all baselines, starts from the same released SFT checkpoint for the corresponding task, so the
methods differ only in the reinforcement-learning stage rather than initialization.

\paragraph{Policy and action space.}
The policy is OpenVLA-OFT with a discrete action head. Each degree of freedom is uniformly
quantized into $256$ bins, and one action token is emitted per degree of freedom. On RoboTwin
with the ALOHA embodiment, each action chunk spans $25$ steps with $14$ action tokens per step.

\paragraph{Training configuration.}
Each RL step samples $64$ scenes with group size $G=8$, yielding $512$ rollouts per step. We use $G=8$ as the default setting throughout the main paper, following
the standard SimpleVLA-RL configuration; its effect is studied separately in
Section~\ref{app:group-size}. We use a learning rate of $5\times10^{-6}$ with constant warmup,
gradient clipping at $1$, PPO clip bounds $(0.2,0.28)$, no KL penalty or entropy bonus, and a
rollout temperature of $1.6$.

\begin{algorithm}[t]
\caption{Adaptive gap-fill rollout generation}
\label{alg:adaptive_gap_fill}
\begin{algorithmic}[1]
\REQUIRE Scene batch size $B$, rollouts per scene $G$, data-parallel size $D$
\STATE $\mathcal{V} \gets \varnothing$
\STATE $N_{\mathrm{gen}} \gets 0$
\WHILE{$|\mathcal{V}| < BG$}
    \IF{$N_{\mathrm{gen}} = 0$}
        \STATE $B_{\mathrm{req}} \gets B$
    \ELSE
        \STATE $N_{\mathrm{keep}} \gets |\mathcal{V}|$
        \STATE $r_{\mathrm{keep}} \gets
        \max\!\left(0.3,\frac{N_{\mathrm{keep}}}{N_{\mathrm{gen}}}\right)$
        \STATE $N_{\mathrm{short}} \gets BG-N_{\mathrm{keep}}$
        \STATE $B_{\mathrm{short}} \gets
        \left\lceil\frac{N_{\mathrm{short}}}{G}\right\rceil$
        \STATE $\widetilde{B} \gets
        \left\lceil\frac{1.2\,B_{\mathrm{short}}}{r_{\mathrm{keep}}}\right\rceil$
        \STATE $\widetilde{B} \gets
        \max\!\left(D,\operatorname{RoundUp}(\widetilde{B},D)\right)$
        \STATE $B_{\mathrm{req}} \gets \min(B,\widetilde{B})$
    \ENDIF
    \STATE Generate $B_{\mathrm{req}}G$ trajectories from fresh scenes
    \STATE $N_{\mathrm{gen}} \gets N_{\mathrm{gen}}+B_{\mathrm{req}}G$
    \STATE Apply dynamic filtering and append surviving trajectories to $\mathcal{V}$
\ENDWHILE
\STATE Retain the first $BG$ trajectories in $\mathcal{V}$ for the policy update
\end{algorithmic}
\end{algorithm}

\begin{figure*}[!t]
    \centering
    \includegraphics[width=\textwidth]{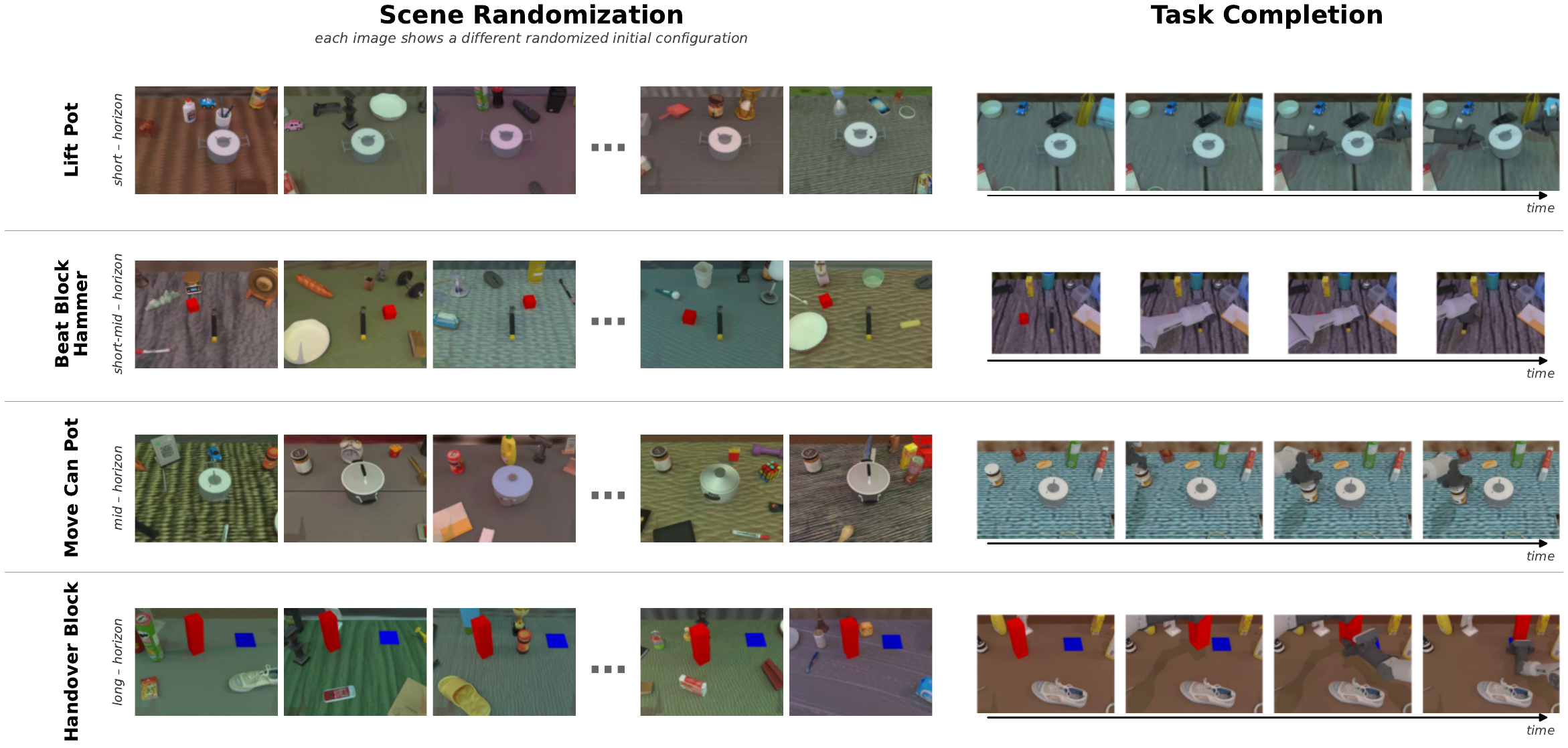}
    \caption{\textbf{Task overview.} Four tasks under randomized object placements, clutter,
    lighting, and backgrounds.}
    \label{fig:tasks}
\end{figure*}

\paragraph{Avoiding unnecessary full-batch refills.}
The original SimpleVLA-RL dynamic-sampling procedure refills the retained batch in fixed increments
of $64$ scenes, or $512$ trajectories. After the initial batch is generated and degenerate groups
are removed, it requests another full batch whenever the retained set falls below the target size,
regardless of the remaining deficit. This can generate substantial surplus. For example, with a
$95\%$ keep rate, only about $26$ additional trajectories are needed, yet the procedure still
generates another $512$.

\paragraph{Adaptive gap-fill rollout generation.}
We replace the fixed-size refill with an adaptive strategy while keeping the optimizer batch
unchanged. Each update targets $B=64$ retained scenes with $G=8$ rollouts per scene, for a total of
$512$ retained trajectories. After each filtering round, we estimate the number of additional
scenes required from the observed keep rate, lower-bound the keep-rate estimate by $0.3$, add
$20\%$ headroom, round the request to a multiple of the data-parallel group size, and cap it at
$B$ scenes. Refill rounds use fresh scenes and continue until the target batch is complete.
Algorithm~\ref{alg:adaptive_gap_fill} gives the full procedure. All methods therefore update on
the same number of retained trajectories, while every generated trajectory, including filtered
and surplus trajectories, is counted toward rollout cost.

\subsection{Tasks}
\label{app:tasks}

\paragraph{Task selection.}
We evaluate on four RoboTwin tasks that cover different manipulation skills and coordination
patterns. \textit{Lift Pot} and \textit{Handover Block} require bimanual coordination, while
\textit{Move Can Pot} and \textit{Beat Block Hammer} primarily use one arm. Together, the tasks
include lifting, pick-and-place, handover, and tool use, and expose different failure modes such as
unstable grasps, object tipping, inaccurate placement, failed transfer, and reward shortcuts.

\paragraph{\textit{Lift Pot}.}
The robot uses both arms to grasp opposite sides of a pot and lift it while keeping the pot upright.
The pot variant and scene configuration are randomized. Success requires a stable bimanual grasp,
sufficient lifting height, and limited pot tilt. Common failures include one-sided grasps, gripper
slip, and tipping during the lift.

\paragraph{\textit{Beat Block Hammer}.}
The robot grasps a hammer with the arm closest to the block and strikes the block with the hammer
head. The block pose is randomized across scenes. Success requires contact between the hammer head
and the block with accurate horizontal alignment. Common failures include incorrect hammer
orientation, poor strike alignment, and missing the block.

\paragraph{\textit{Move Can Pot}.}
The robot picks up a can and places it beside a pot. The can pose, object variants, and instructed
arm are randomized, while the pot remains near the center of the workspace. Success requires the
can to be upright, released, and placed within the target region beside the pot. A characteristic
failure is the shove-cheat, in which the policy moves the pot toward the stationary can instead of
grasping and moving the can.

\paragraph{\textit{Handover Block}.}
The left arm grasps a block, transfers it to the right arm, and places it on a target pad. The block
and pad poses are randomized. Success requires completing the handover and releasing the block at
the target location. Common failures include dropping the block during transfer, incomplete
handover, and inaccurate final placement.

\begin{figure*}[t]
    \centering
    \includegraphics[width=\textwidth]{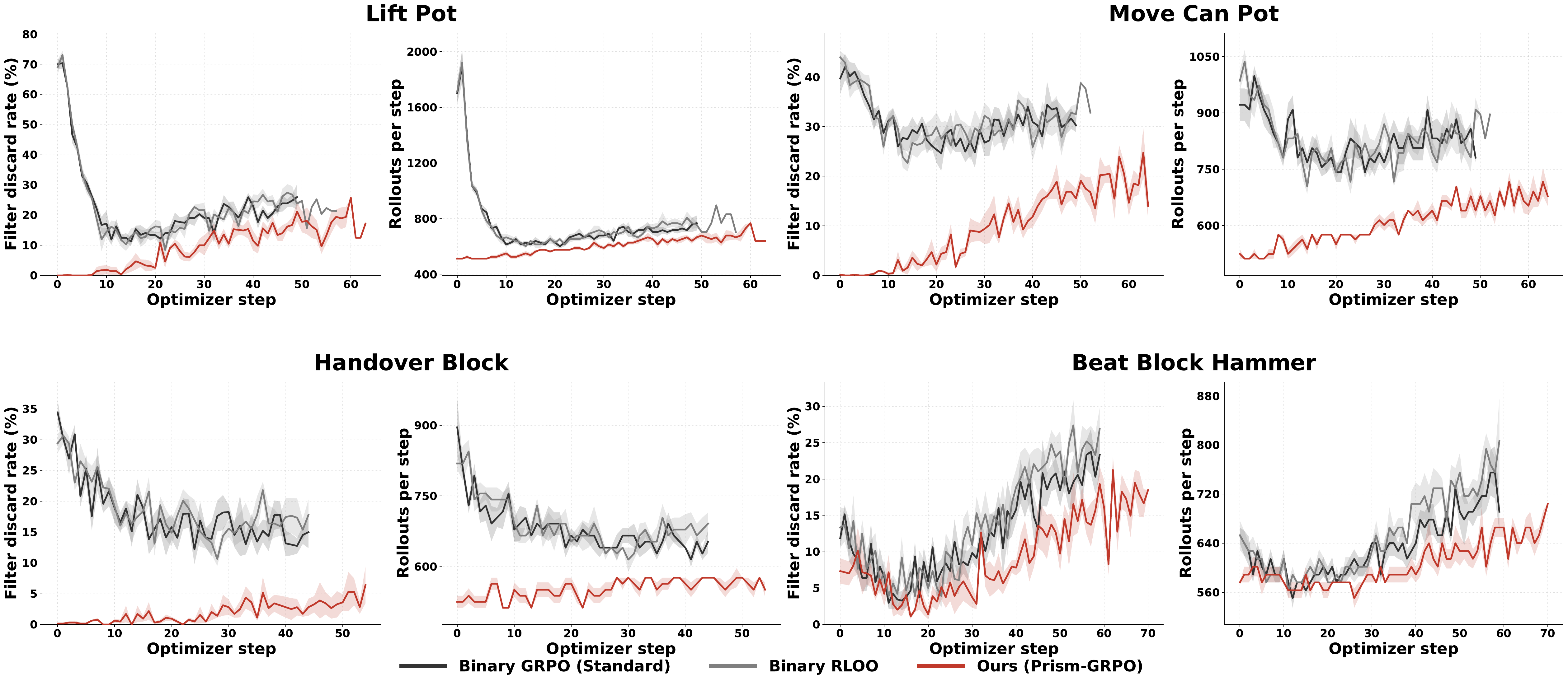}
    \caption{\textbf{Dynamic-filtering statistics across four tasks.}
    Each task panel reports the percentage of zero-variance groups discarded by the trainer
    (left) and the total number of trajectories generated per optimizer step (right).
    Prism-GRPO consistently discards fewer groups and consequently remains closer to the minimum
    cost of $512$ rollouts per step.}
    \label{fig:filter_statistics}
\end{figure*}

\section{Advantage Calculation}
\label{app:advantage}

This section describes how Prism-GRPO converts the combined reward into advantages and why the
leave-one-out estimator provides learning signal for same-outcome groups whenever their trajectory
qualities differ.

\paragraph{Combined reward.}
Each trajectory receives
$R(\tau)=\mathrm{success}(\tau)+\lambda q(\tau)$ with $\lambda=0.2$, where
$q(\tau)\in[0,1]$ is the quality score defined in Section~\ref{app:quality_metrics}. The quality
term therefore lies in $[0,\lambda]$. Any successful trajectory has reward at least $1$, while any
failed trajectory has reward at most $\lambda$, so every success continues to outrank every failure.
The quality term only distinguishes trajectories within the same outcome class and is applied to
mixed- and same-outcome groups alike.

\paragraph{Leave-one-out advantage.}
We use the leave-one-out estimator, which compares each trajectory with the mean reward of the
remaining group members:
\begin{equation}
    A_i
    =
    R_i-\frac{1}{G-1}\sum_{j\neq i}R_j.
    \label{eq:app-rloo}
\end{equation}
Unlike standard group-normalized GRPO, RLOO does not divide by the within-group reward standard
deviation. In a same-outcome group, every trajectory shares the same binary outcome $b$, so
$R_i=b+\lambda q_i$. The constant outcome term cancels:
\begin{equation}
\begin{aligned}
    A_i
    &=
    (b+\lambda q_i)
    -
    \left(
        b+\frac{\lambda}{G-1}\sum_{j\neq i}q_j
    \right) \\
    &=
    \lambda
    \left(
        q_i-\frac{1}{G-1}\sum_{j\neq i}q_j
    \right).
\end{aligned}
\label{eq:app-cancel}
\end{equation}
Thus, whenever quality varies within a same-outcome group, the group receives nonzero advantages
and contributes learning signal where a binary reward would give none. This behavior requires no
special handling in the implementation; the same estimator in Equation~\eqref{eq:app-rloo} is
applied to every group.
Standard group normalization would retain the ordering induced by quality, but it would largely
cancel the scale of $\lambda$ because the reward standard deviation in a same-outcome group is also
proportional to $\lambda$. RLOO instead preserves the intended magnitude of the quality term, which
is why we use it throughout Prism-GRPO. We empirically ablate this design choice in
Section~\ref{app:rloo-lambda}, where we compare RLOO with the standard group-normalized GRPO
estimator.

\section{Baseline Implementation}
\label{app:baseline}

All baselines use the setup of Appendix~\ref{app:setup} and differ from Prism-GRPO only in the
reward and the advantage estimator.

\paragraph{Binary GRPO.}
The reward is the binary success flag, $R=\mathrm{success}$. Advantages are the standard
group-standardized GRPO values
$A_i=(R_i-\mu)/(\sigma+\varepsilon)$, where $\mu$ and $\sigma$ are the mean and standard
deviation over the $G$ group members. Zero-variance groups carry no advantage and are removed by
the dynamic-sampling filter. This reproduces the SimpleVLA-RL configuration.
The peak success values reported by SimpleVLA-RL are not directly comparable with the maximum of
the mean training curve in our main paper because the two presentations aggregate checkpoints
differently. SimpleVLA-RL reports the best-success checkpoint from each of three seeds and then
aggregates the resulting scores, whereas our curves show the step-aligned mean over five seeds
throughout training. For example, SimpleVLA-RL reports a $64.1\%$ success rate on
\textit{Lift Pot}. In our five Binary GRPO runs, the best success rates are $60.6\%$ at step 39,
$64.5\%$ at step 34, $64.5\%$ at step 49, $63.7\%$ at step 29, and $64.8\%$ at step 44. These
per-seed maxima average to $63.6\%$, closely matching the reported result. However, because the
maxima occur at different training steps, they do not appear simultaneously in the step-aligned
five-seed mean curve and are therefore smoothed when the learning curves are averaged.

\paragraph{Binary RLOO.}
The reward is identical to Binary GRPO; the only change is the estimator. Advantages use the
leave-one-out baseline~\eqref{eq:app-rloo} with no standard-deviation division. Since Prism-GRPO
also uses RLOO, this baseline isolates the contribution of the quality term from that of the
estimator. Everything else is inherited unchanged.

\paragraph{Random reward.}
Each trajectory receives
$R=\mathrm{success}+\lambda u$, where
$u\sim\mathrm{Uniform}(0,1)$ is sampled independently for each rollout. The random term almost
always introduces within-group reward variation, so same-outcome groups rarely remain
zero-variance and are typically retained by the dynamic-sampling filter. However, because
$u$ contains no information about task execution, it provides only noise rather than a meaningful
quality signal. Random reward uses the same RLOO advantage estimator as Prism-GRPO, thereby
isolating the effect of replacing the trajectory-derived quality signal with uninformative random
scores while holding the reward scale, filtering behavior, and advantage estimator fixed.

\paragraph{RL-ZVP.}
RL-ZVP~\citep{le2025no} was developed for LLM reasoning, where one decision is one output token. We
port it to our discrete action head by treating one action-chunk step as one decision: we mean-pool
the per-token entropies over the $14$ action-token slots of a step to obtain a single entropy
$H_{i,t}$ per decision, computed from the rollout policy and detached before use. A group is judged
degenerate on the binary success signal, all-success or all-failure, rather than on reward
variance; this is exact for a binary base reward. On a degenerate group the zeroed GRPO advantage
is overwritten by
\begin{equation}
A_{i,t}=
\begin{cases}
+\alpha\,H_{i,t}, & \text{all-success},\\
-\alpha\,\bigl(\max_k H_{i,k}-H_{i,t}\bigr), & \text{all-failure},
\end{cases}
\end{equation}
with $\alpha=0.1$ and $\max_k$ over the valid decisions of the trajectory. Mixed groups keep their
standard GRPO advantage; RL-ZVP composes only with group-standardized GRPO. So that the override
sees degenerate groups, the dynamic-sampling filter is disabled for this baseline.

\section{Compute usage}
\begin{figure}[t]
  \centering
\includegraphics[width=0.85\linewidth]{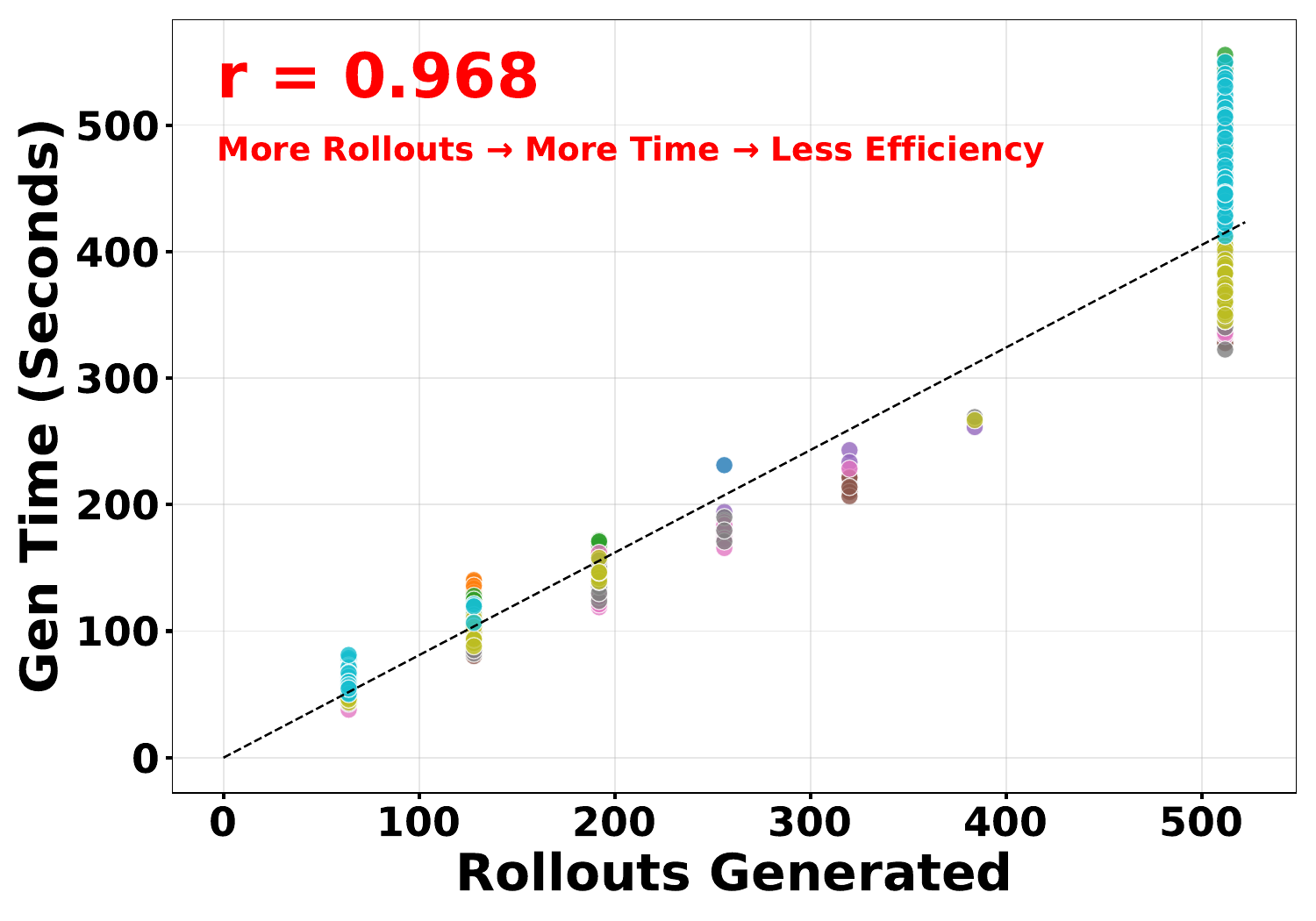}
  \caption{\textbf{Rollout-generation cost.}
  Per-step generation time scales nearly linearly with the number of requested rollouts across
  ten RL runs ($1{,}295$ measurements).  Each color denotes one run. } 
  \label{fig:gen-time-vs-rollouts}
\end{figure}

\paragraph{Hardware and per-run wall time.}
Each task is trained on a single node with $8\times$ NVIDIA H100-80\,GB GPUs. A complete RL run
takes approximately $12$--$16$ hours, depending on the task horizon and rollout cost. Rollout
generation accounts for roughly $8$--$12$ hours and dominates the total wall time, with the
remaining time spent on policy updates, dynamic filtering, and periodic validation.

\paragraph{Rollout savings translate directly to compute savings.}
Figure~\ref{fig:gen-time-vs-rollouts} shows a near-linear relationship between rollout count and
generation time, averaging approximately $0.81$\,s per rollout. Because rollout generation
dominates training time, reducing the rollout budget yields an almost proportional wall-clock
saving. For example, a $30\%$ reduction from the $35{,}000$ rollouts required by Binary GRPO on
\textit{Lift Pot} avoids $10{,}500$ rollouts, saving approximately $2.4$ of the $7.9$ hours spent
on rollout generation.

\begin{figure*}[t]
    \centering
    \includegraphics[width=\textwidth]{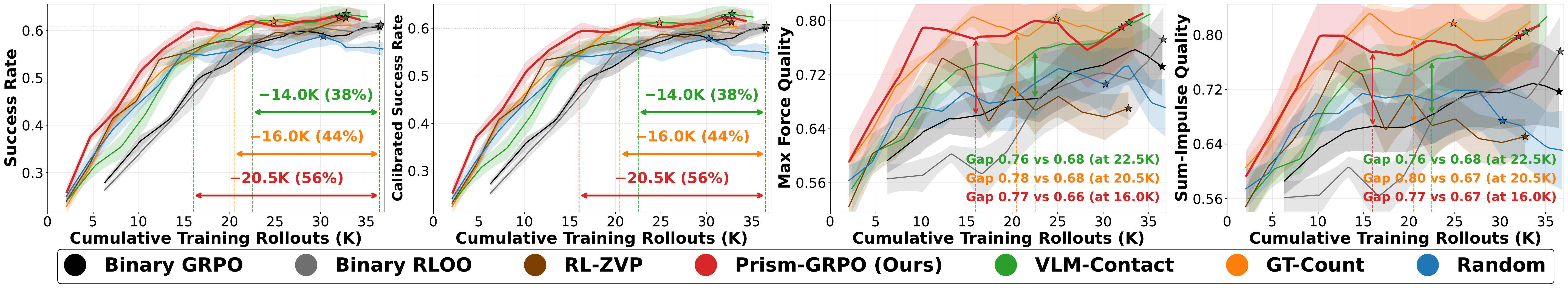}
    \caption{\textbf{Full collision-quality results.}}
    \label{fig:full_collision_metrics}
\end{figure*}

\begin{figure*}[t]
    \centering
    \includegraphics[width=\textwidth]{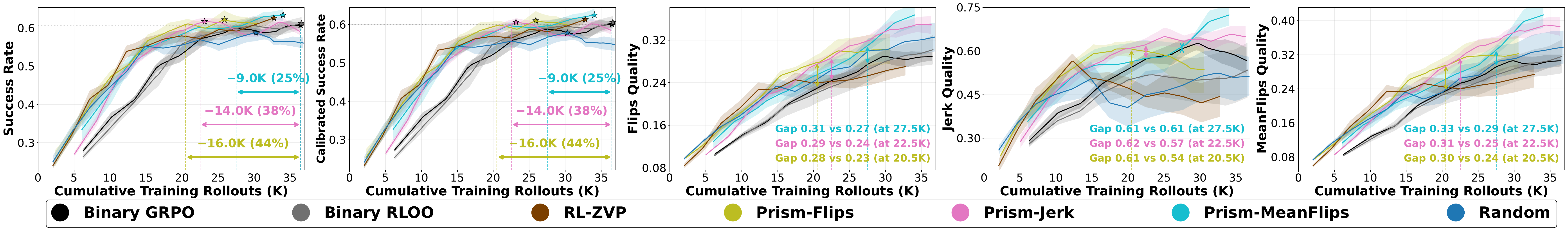}
    \caption{\textbf{Full smoothness-quality results.}}
    \label{fig:full_smoothness_metrics}
\end{figure*}

\begin{figure*}[t]
    \centering
    \includegraphics[width=\textwidth]{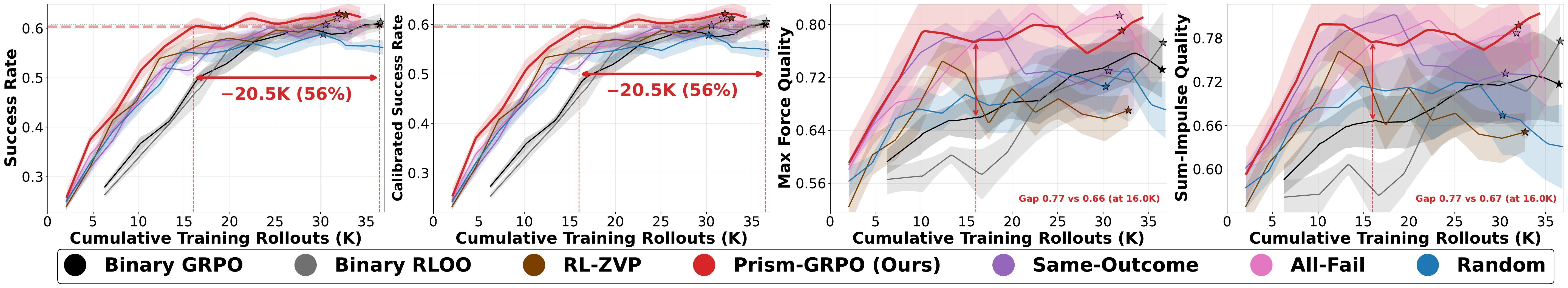}
    \caption{\textbf{Quality-term placement ablation.}
    Full results across success and collision-quality metrics.}
    \label{fig:term_ablation}
\end{figure*}
\begin{figure*}[t]
    \centering
    \includegraphics[width=\textwidth]{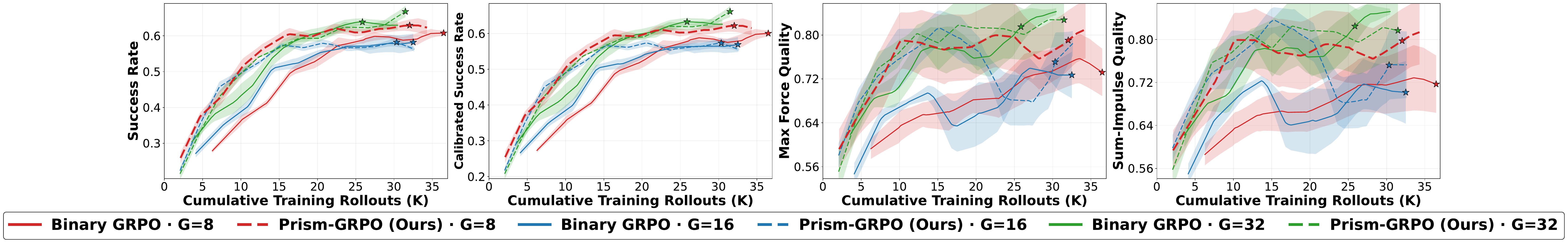}
    \caption{\textbf{Effect of group size.}
    Each optimizer update uses $512$ generated rollouts, while the group size
    varies over $G\in\{8,16,32\}$. Prism-GRPO provides the largest gain at
    $G=8$, while the gain decreases as the group size increases.}
    \label{fig:group_size_ablation}
\end{figure*}

\section{Detailed Experimental Results}

\label{app:exp}
\subsection{Full Statistics}
\label{app:g1}

Figure~\ref{fig:filter_statistics} examines how often the quality signal recovers same-outcome
groups and how this reduction in filtering affects rollout cost across all four tasks.

\paragraph{Filter discard rate.}
The filter discard rate is the fraction of generated groups whose final rewards are identical
across all group members and are therefore removed before the policy update. Under Binary GRPO and
Binary RLOO, every all-success or all-failure group has zero reward variance and is discarded.
Under Prism-GRPO, the quality term introduces reward variation whenever trajectories with the same
outcome differ in execution quality, allowing these groups to contribute to learning. The discard
rate is not necessarily zero because a same-outcome group can still receive identical quality
scores, such as an all-success group in which every trajectory has perfect quality. The reduction
in discard rate therefore directly measures how much otherwise unusable same-outcome data is
recovered by Prism-GRPO.

\paragraph{Rollouts per step.}
This metric counts every trajectory generated to construct one optimizer batch, including filtered
trajectories and surplus trajectories produced during adaptive gap filling. Each update requires
$64$ retained groups with $G=8$, corresponding to $512$ retained trajectories. The trainer first
generates $512$ trajectories and then samples additional scenes to replace discarded groups until
the target batch is complete. A higher discard rate therefore requires more refill rollouts and
increases the generation cost of each optimizer step.

\paragraph{Random and RL-ZVP incur no filtering.}
We omit Random and RL-ZVP from Figure~\ref{fig:filter_statistics} because neither method produces
discarded groups in our experiments. The filter removes a group only when all trajectories receive
the same final score. Random uses $R=\mathrm{success}+\lambda u$, where an independent continuous
random quality $u$ is sampled for each rollout, while RL-ZVP assigns trajectory-dependent
policy-confidence values. These scores vary within every sampled group, giving both methods a zero
discard rate and the minimum rollout cost of $512$ trajectories per step.

\paragraph{Full quality metrics.}
Due to space constraints, Figure~\ref{fig:ablations} in the main paper contains one collision panel
and one smoothness panel, each reporting a representative quality metric. Here, we provide the full
collision and smoothness results in Figures~\ref{fig:full_collision_metrics}
and~\ref{fig:full_smoothness_metrics}. For collision quality, Max-Force Quality and Sum-Impulse
Quality show similar trends despite measuring the strongest individual contact and the accumulated
contact burden, respectively. The smoothness metrics likewise produce consistent trends across
flip- and jerk-based measures. This cross-metric consistency indicates that the observed gains are
not specific to a single evaluation metric.

\subsection{Quality Weight Ablation}
\label{app:g2}
Figures~\ref{fig:q_ablation_lift} and~\ref{fig:q_ablation_move} report the full learning curves
for $\lambda\in\{0.2,0.5,0.9,1,2\}$ on \textit{Lift Pot} and \textit{Move Can Pot}. Each column
corresponds to one value of $\lambda$, and the rows report success rate, calibrated success rate,
Max-Force Quality, and Sum-Impulse Quality. On both tasks, success rate and calibrated success
rate follow similar trends, showing that the conclusions are not sensitive to contact-based
success calibration. Likewise, Max-Force Quality and Sum-Impulse Quality exhibit consistent
patterns despite measuring the strongest collision and cumulative collision burden, respectively.
The main paper uses Max-Force Quality as the primary collision-quality metric because it directly
corresponds to the default GT-Max Force training signal; the complete results here confirm that
the same conclusions hold under the other reported metrics.

\subsection{Quality-Term Placement Ablation}
\label{app:term}

\paragraph{The quality term should be applied to all groups.}
Figure~\ref{fig:term_ablation} compares our default reward with variants that apply the quality
term only to same-outcome groups or only to all-failure groups. Applying quality to every group
nearly doubles the success rate relative to these restricted variants and produces a more stable
quality curve. The quality term serves a useful role in every group type: in mixed groups, it
distinguishes trajectories within each outcome class while preserving success dominance; in
same-outcome groups, it provides the learning signal that the binary reward lacks. Restricting the
quality term therefore discards informative execution differences from part of the training data,
whereas the default combined reward uses them consistently across all groups.

\subsection{Effect of Group Size}
\label{app:group-size}

We ablate the number of rollouts sampled per group using
$G\in\{8,16,32\}$, where $G=8$ is the default setting in the main experiments.
To isolate the effect of group size, we keep the total number of generated
rollouts per optimizer update fixed at $512$ and adjust the number of groups
accordingly. All other settings, including initialization, rollout budget,
optimizer configuration, and quality weight, remain unchanged.

Figure~\ref{fig:group_size_ablation} shows that Prism-GRPO provides the largest
improvement at $G=8$. The gain becomes smaller at $G=16$, and the results are
nearly identical at $G=32$. This trend is consistent with the design of
Prism-GRPO: as $G$ increases, all-success and all-failure groups become less
frequent, so the binary baseline discards fewer groups and leaves less wasted
rollout data for Prism-GRPO to recover.

\begin{figure*}[t]
    \centering
    \includegraphics[width=\textwidth]{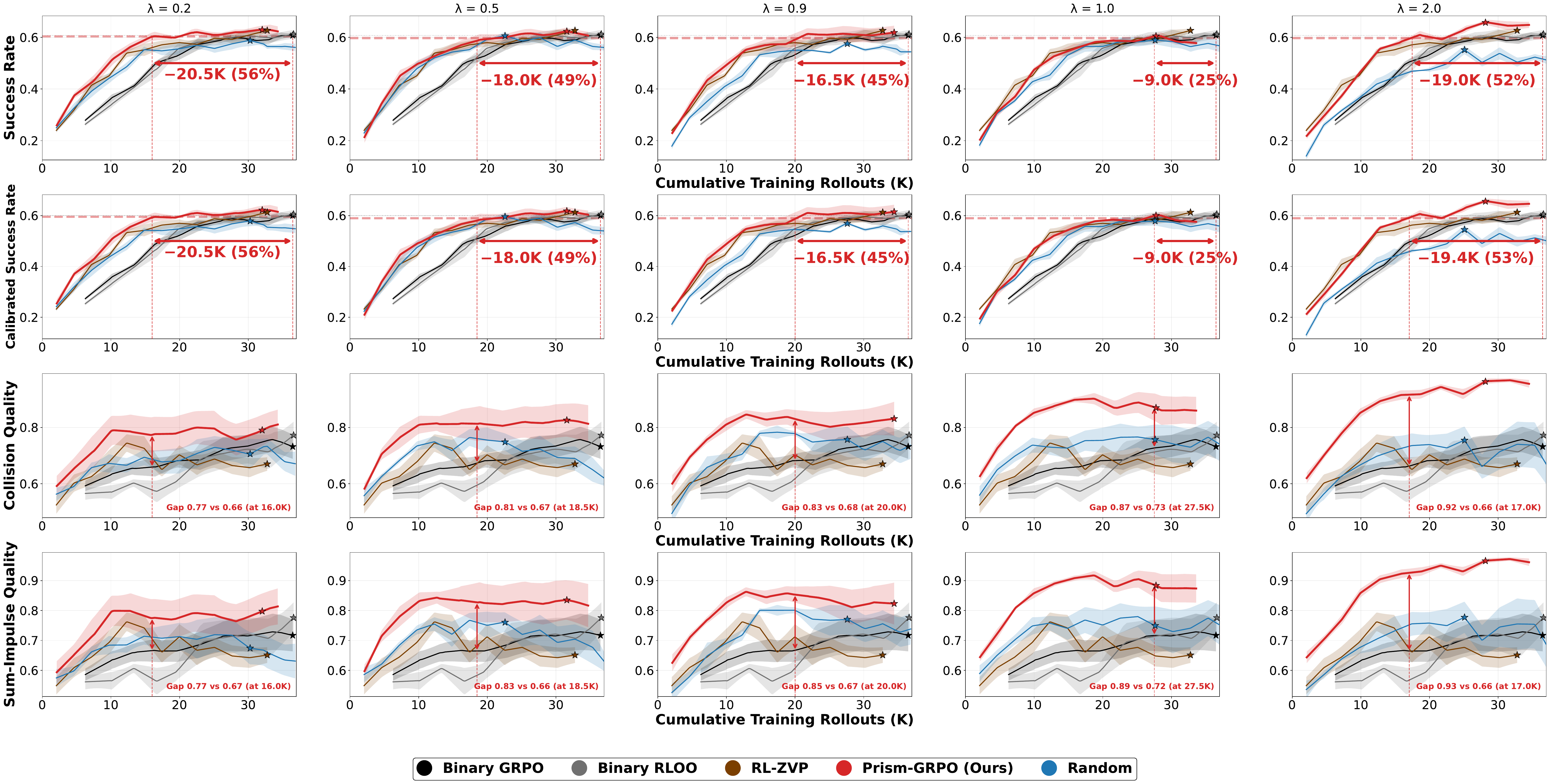}
    \caption{\textbf{Quality-weight ablation on \textit{Lift Pot}.}}
    \label{fig:q_ablation_lift}
\end{figure*}

\begin{figure*}[t]
    \centering
    \includegraphics[width=\textwidth]{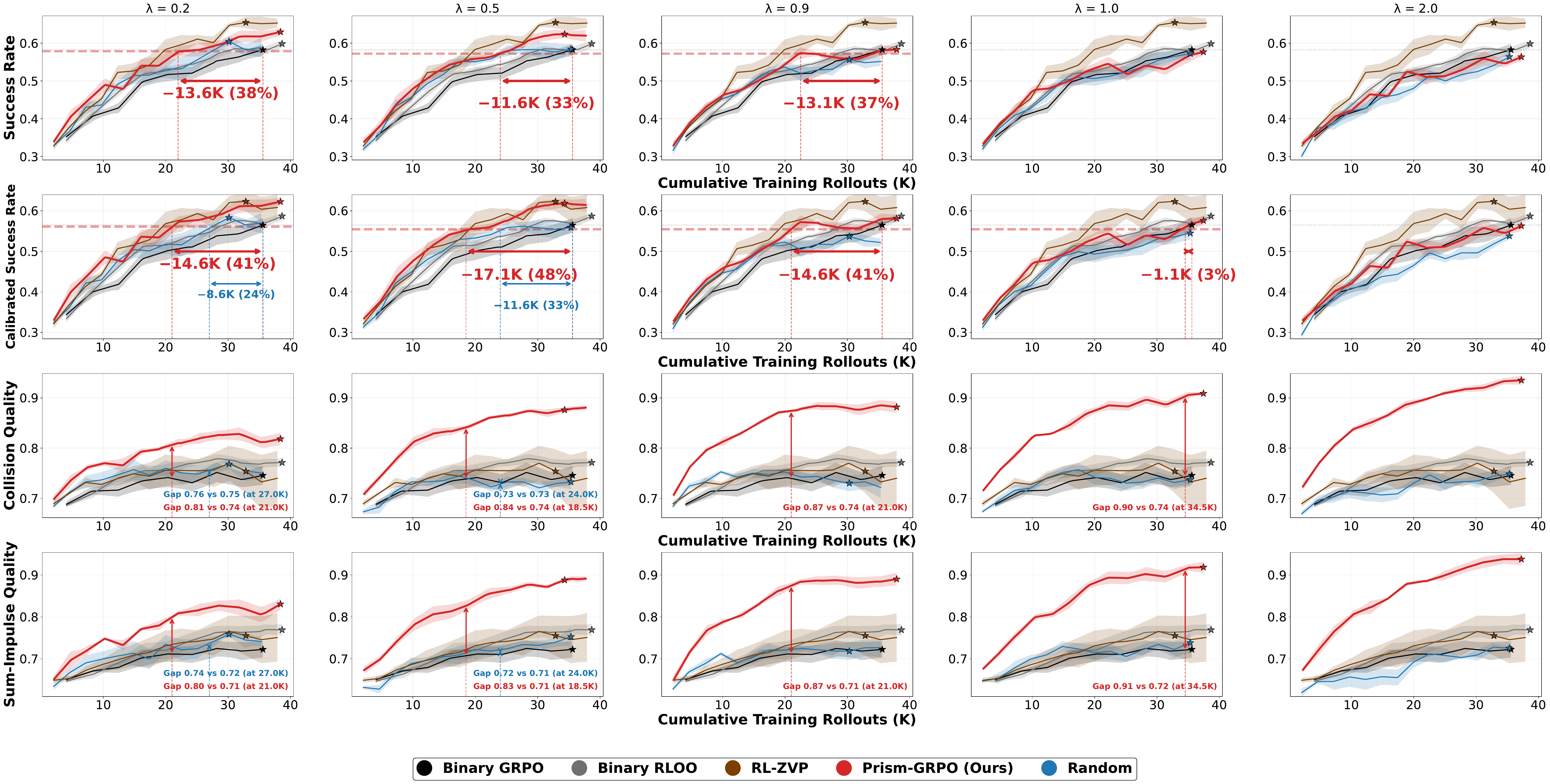}
    \caption{\textbf{Quality-weight ablation on \textit{Move Can Pot}.}}
    \label{fig:q_ablation_move}
\end{figure*}

\subsection{Effect of Quality-Weight Decay}
\label{app:decay}

A natural question is whether the quality weight $\lambda$ should decay during training. Early in
training, failures dominate and many groups are all-failure, so the quality term is particularly
useful for recovering learning signal. Later, as the success rate increases, one might expect that
reducing $\lambda$ would shift the update increasingly toward task success.
Figure~\ref{fig:decay_ablation} shows that decay does not improve performance for
$\lambda=0.2$ or $0.5$. For $\lambda=0.9$, decay reaches the target about $2$K rollouts earlier,
but the advantage is not clearly observable from the overall learning curves. Because decay
provides no substantial or consistent improvement, we use a fixed quality weight throughout the
main experiments.

\begin{figure*}[t]
    \centering
    \includegraphics[width=\textwidth]{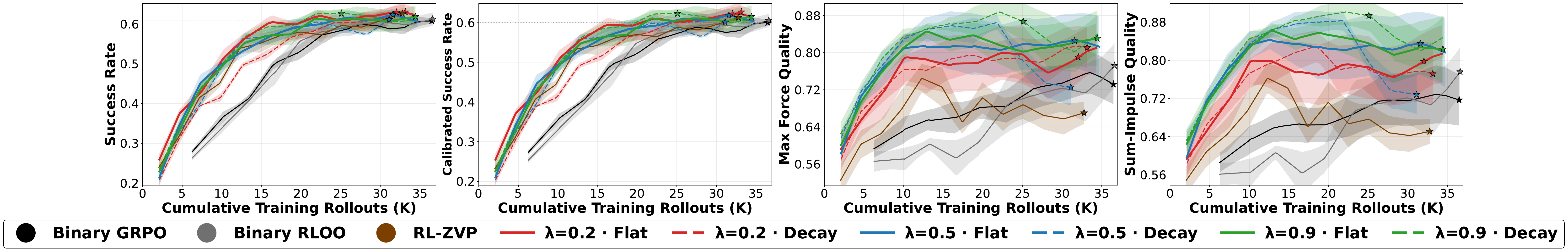}
\caption{\textbf{Quality-weight decay ablation.}
Solid lines use a fixed quality weight, dashed lines use decay, and colors indicate different
values of $\lambda$.}
    \label{fig:decay_ablation}
\end{figure*}

\subsection{Effect of the Advantage Estimator}
\label{app:rloo-lambda}

As discussed in Section~\ref{app:advantage}, RLOO preserves the scale of the quality term in
same-outcome groups, whereas standard group normalization largely cancels this scale. We therefore
compare the two estimators while keeping the reward and all other training settings unchanged.
Figure~\ref{fig:rloo_lambda} shows that RLOO produces more stable learning and consistently achieves
the expected rollout-saving gains. Standard group-normalized GRPO exhibits less stable optimization
and smaller, less consistent rollout savings. We therefore use RLOO throughout the main experiments.

\begin{figure*}[t]
    \centering
    \includegraphics[width=\textwidth]
    {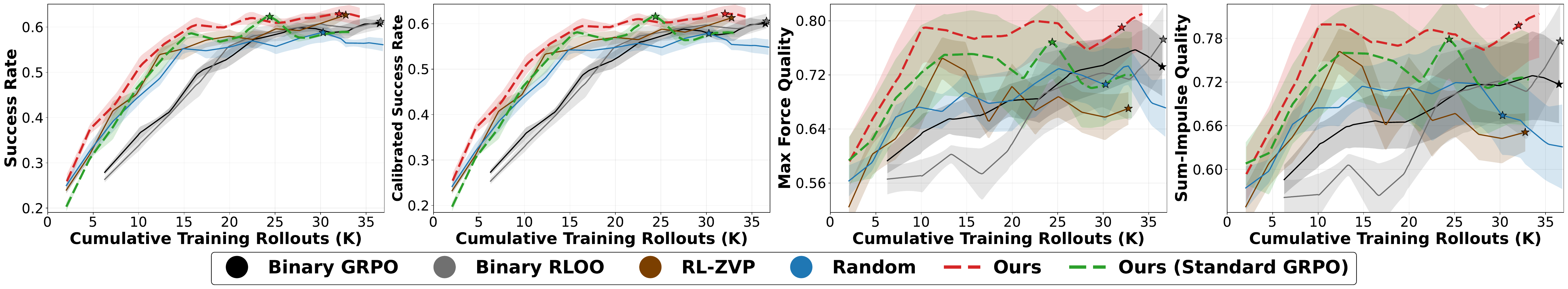}
    \caption{\textbf{Effect of the advantage estimator.}
    RLOO yields more stable learning and more consistent rollout savings than standard
    group-normalized GRPO. $\lambda=0.2$}
    \label{fig:rloo_lambda}
\end{figure*}

\section{Empirical Analysis of Success--Quality Alignment}
\label{app:correlation}
\subsection{Analysis Protocol}
\label{app:correlation_protocol}

We empirically examine whether the trajectory-level quality signals used by Prism-GRPO are
associated with task success. Correlations computed across all trajectories can be confounded by
scene difficulty, since easier scenes may simultaneously produce higher success rates and cleaner
execution. We therefore use two complementary diagnostics: a within-scene quality gap that compares
success and failure under the same scene configuration, and a cross-scene Spearman correlation
between scene success rate and mean quality.

\paragraph{Within-scene measure.}
For each task, we evaluate its best-success checkpoint on $256$ randomized scenes with $56$
stochastic rollouts per scene, yielding $14{,}336$ trajectories per task. Rollouts from the same
scene share an identical simulator initialization, including object poses, clutter, lighting,
tabletop properties, and instruction; only policy sampling and simulator stochasticity vary. We
apply no dynamic filtering and include every generated trajectory. For each scene containing at
least one successful and one failed rollout, we define the within-scene quality gap as
\begin{equation}
    \Delta q(x)
    =
    \bar q_{\mathrm{succ}}(x)
    -
    \bar q_{\mathrm{fail}}(x),
    \label{eq:within_scene_quality_gap}
\end{equation}
where the two terms denote the mean normalized quality of successful and failed trajectories,
respectively. A positive gap indicates that successful trajectories have higher quality under the
same scene configuration. Each mixed-outcome scene receives equal weight, and we report the mean
gap with a $95\%$ confidence interval obtained from $10{,}000$ scene-level bootstrap resamples.

\paragraph{Cross-scene measure.}
As a complementary population-level diagnostic, we compute Spearman's $\rho$ between each scene's
success rate and mean trajectory quality. Each point in the corresponding scatter plot represents
one scene, with the horizontal coordinate given by its success rate over the $56$ rollouts and the
vertical coordinate given by their mean quality. Spearman correlation measures whether scenes with
higher success rates also tend to have higher quality, without assuming a linear relationship.
Whereas the within-scene gap compares success and failure under an identical scene configuration,
the cross-scene measure evaluates whether quality tracks success across the broader task
distribution. We compute both diagnostics for the collision and smoothness signals defined in
Appendix~\ref{app:quality}.

\subsection{Collision-Based Alignment}
\label{app:correlation_collision}

Figure~\ref{fig:corr_collision} reports both alignment diagnostics for Max-Force Quality,
Contact-Count Quality, and Sum-Impulse Quality. Max-Force Quality is the default signal used in
the main experiments, while the other two signals test whether the conclusion depends on a
particular definition of unintended contact. For Max-Force Quality, the mean within-scene quality
gap is approximately $0.55$ on \textit{Lift Pot}, $0.45$ on \textit{Move Can Pot}, $0.87$ on
\textit{Handover Block}, and $0.85$ on \textit{Beat Block Hammer}, with $89.5\%$--$100\%$ of
mixed-outcome scenes exhibiting a positive gap. The corresponding cross-scene Spearman
correlations are also strongly positive, reaching $0.79$, $0.68$, $0.98$, and $0.80$,
respectively. Contact-Count and Sum-Impulse Quality show the same qualitative pattern across all
four tasks, despite measuring collision frequency and accumulated contact burden rather than the
strongest individual contact. Together, these results show that success is strongly associated
with cleaner execution both within identical scenes and across the task distribution, and that
this relationship is robust to the particular collision statistic extracted from the trajectory.

\begin{figure*}[t]
    \centering
    \includegraphics[width=\textwidth]{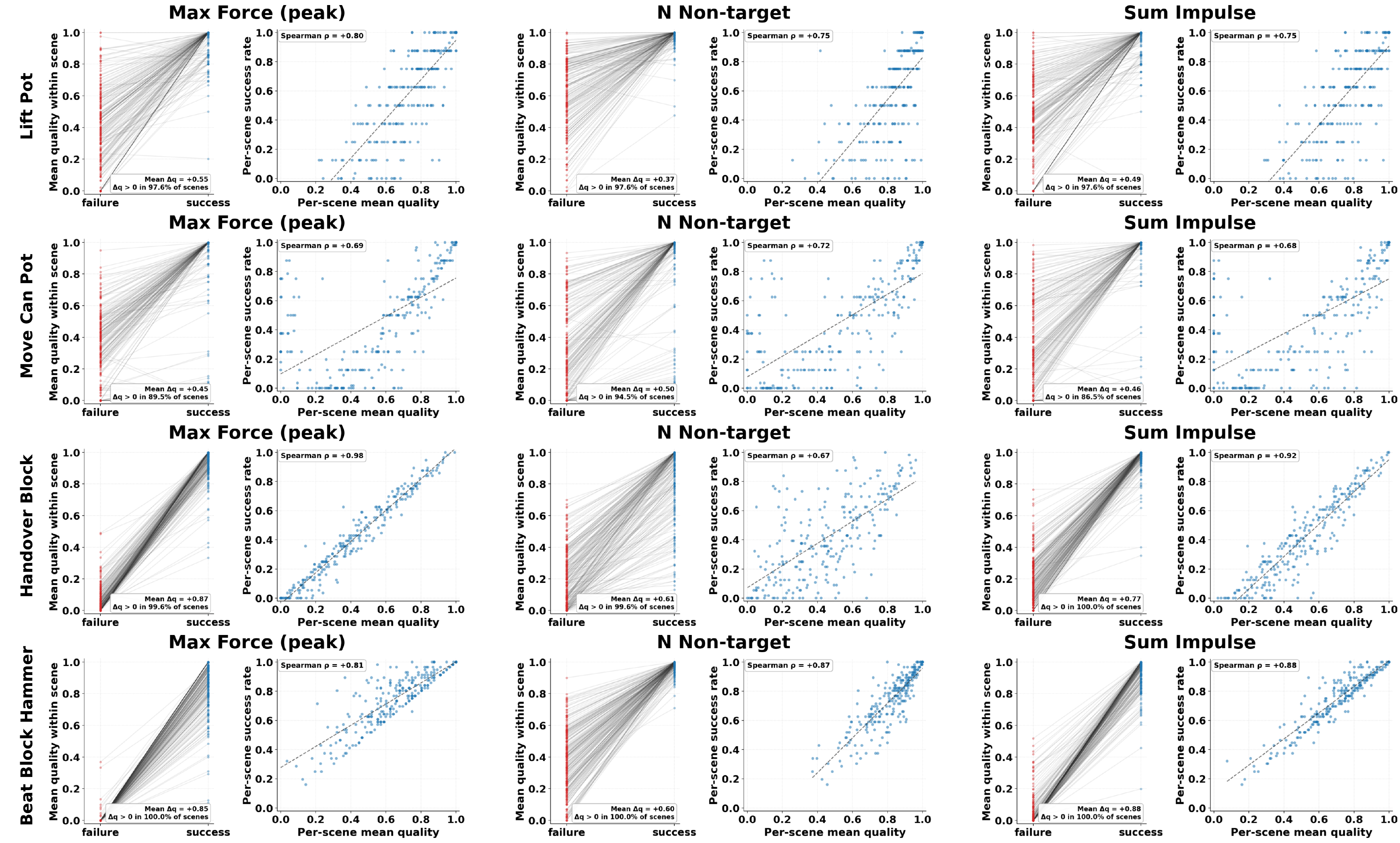}
    \caption{\textbf{Collision-based success--quality alignment.}
    Each task is evaluated using Max-Force, Contact-Count, and Sum-Impulse Quality. Within-scene
    plots compare the mean quality of successful and failed trajectories generated from the same
    mixed-outcome scene; positive gaps indicate that successful trajectories are cleaner.
    Cross-scene plots report the Spearman correlation between scene success rate and mean quality.}
    \label{fig:corr_collision}
\end{figure*}

\subsection{Smoothness-Based Alignment}
\label{app:correlation_smoothness}

Figure~\ref{fig:corr_smoothness} reports both alignment diagnostics for
\textbf{Prism-Flips}, \textbf{Prism-MeanFlips}, and \textbf{Prism-Jerk}, labeled as
Flips (Max), Mean Flips, and Max Jerk in the figure. Across all task--signal
combinations, $87\%$--$100\%$ of mixed-outcome scenes have a positive
within-scene quality gap, and the cross-scene correlations are consistently
positive and strong. These results show that the success--smoothness relationship
holds across multiple smoothness definitions rather than depending on a single
metric.

\begin{figure*}[t]
    \centering
    \includegraphics[width=\textwidth]{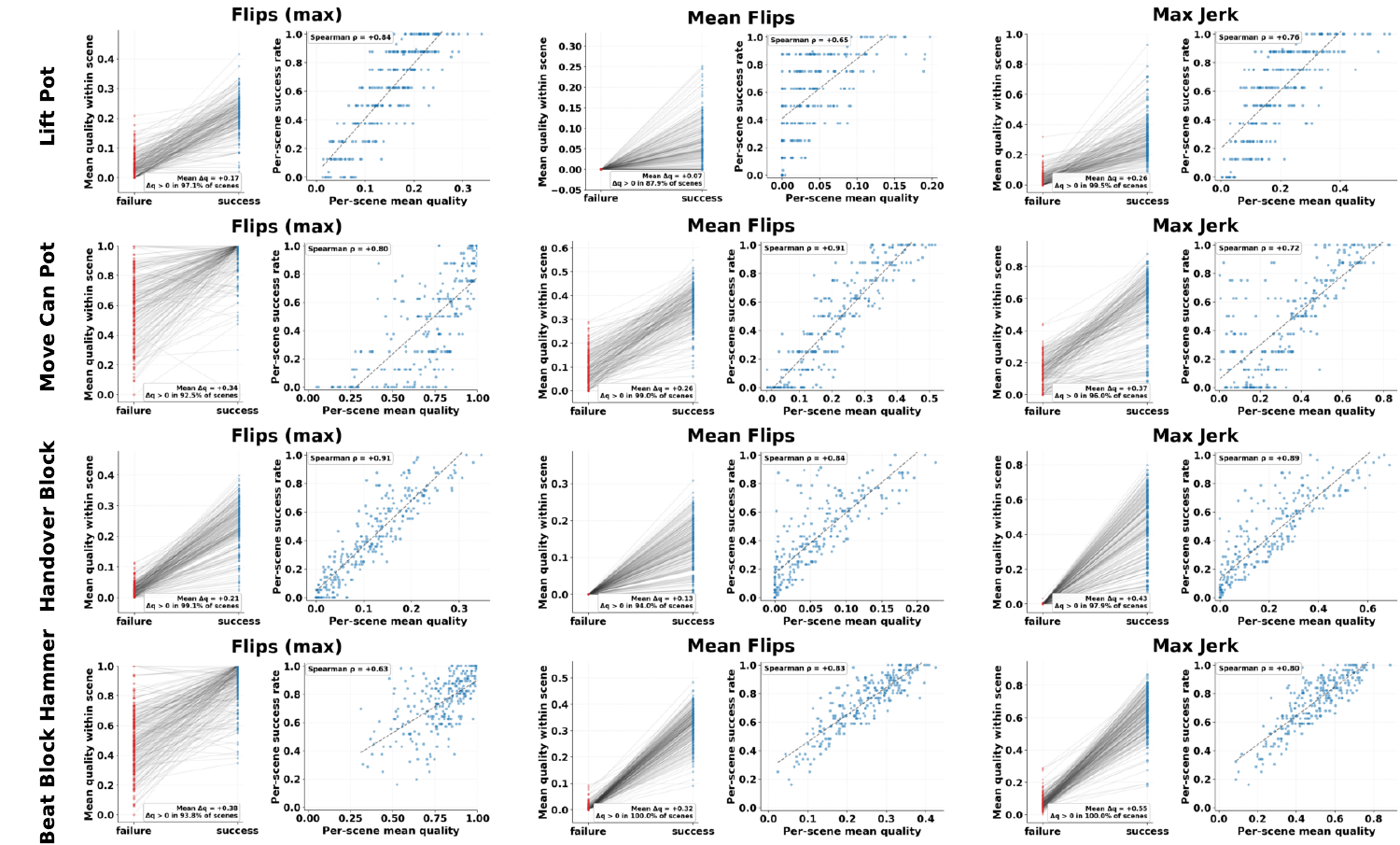}
    \caption{\textbf{Smoothness-based success--quality alignment.}
    Each task is evaluated using maximum joint reversals, mean joint reversals, and action jerk.
    The within-scene and cross-scene diagnostics follow the same definitions as
    Figure~\ref{fig:corr_collision}.}
    \label{fig:corr_smoothness}
\end{figure*}

One implementation detail is needed for \textbf{Prism-Flips} on
\textit{Move Can Pot} and \textit{Beat Block Hammer}, where successful execution naturally
contains several joint-direction reversals during grasp-and-place or lift-and-strike motions.
We therefore subtract a reversal floor automatically estimated as the median reversal count among
successful trajectories before normalization, allowing Prism-Flips to measure excess oscillation
beyond the motion required for task completion. The same data-driven estimation rule is applied
without manual tuning, while no floor is needed for \textit{Lift Pot} or
\textit{Handover Block}. Prism-MeanFlips and Prism-Jerk are computed without this adjustment.
Our goal is not to identify the optimal smoothness metric, but to test whether several reasonable
trajectory-derived signals are consistently associated with success; developing more specialized
measures that better separate necessary motion from undesirable oscillation remains an interesting
direction for future work.

\section{Real-World Experiments}
\label{app:real}

\subsection{Deployment Setup}

We evaluate the policies zero-shot on a Piper robot without additional real-world fine-tuning.
The task is to move a can next to a pot. Observations are captured by an Orbbec DaBai DC1
camera at $640\times480$ resolution and resized to the $240\times320$ resolution used during
training before being passed through the standard policy preprocessing pipeline.

The policy predicts $25$-step action chunks. Each executed action is seven-dimensional, consisting
of six arm-joint targets and one gripper target. To prevent abrupt motions on the physical robot,
we apply a proportional rate limit to the six arm-joint dimensions in each control tick:
\begin{equation}
\begin{aligned}
    \alpha
    &=
    \min\!\left(
        1,\,
        \frac{\eta}
        {\max_{1\leq j\leq 6}
        |\Delta_{j}^{\mathrm{model}}|}
    \right), \\
    \Delta_{i}^{\mathrm{sent}}
    &=
    \alpha\,\Delta_{i}^{\mathrm{model}},
    \qquad
    i\in\{1,\ldots,6\},
    \quad
    \eta=0.05\ \mathrm{rad}.
\end{aligned}
\end{equation}
Here, $\Delta^{\mathrm{model}}$ is the model-requested joint displacement and $\eta$ limits the
largest arm-joint change to $0.05$\,rad per control tick. When any requested joint displacement
exceeds this limit, all six arm-joint dimensions are scaled by the same factor, preserving the
relative direction of the commanded motion. The seventh action dimension controls the gripper and
is exempt from this rate limit. 

\subsection{Evaluation Protocol}

The purpose of this experiment is to examine whether the shove-cheat behavior observed in the
simulated \textit{Move Can Pot} task transfers to the real world. In this shortcut, the policy moves
the pot toward the can instead of lifting and placing the can as instructed. To stress-test this
behavior, for each method we select the checkpoint with the highest simulated shove-cheat rate
among checkpoints with similar simulated success rates of approximately $65\%$.
We conduct $25$ trials for each policy. In every trial, the can is initially placed to the left of
the pot. We manually reset the can and pot to approximately the same positions before each trial. 

\subsection{Real-World Results}

Table~\ref{tab:real_results} reports clean success and shove-cheat behavior over $25$ trials per
method. Prism-GRPO achieves $6/25$ clean successes, compared with $4/25$ for Binary GRPO and
$2/25$ for RL-ZVP. We do not interpret the difference between Prism-GRPO and Binary GRPO as
conclusive evidence of improved clean success, since their counts differ by only two trials. The
more pronounced result is the occurrence of the shove-cheat behavior observed in simulation.
Instead of moving the can beside the pot, the policy attempts to move the pot toward the stationary
can. This shortcut occurs in $5/25$ RL-ZVP trials and $1/25$ Binary GRPO trials, but is not
observed in any of the $25$ Prism-GRPO trials. Thus, while the clean-success comparison should be
interpreted cautiously, the shortcut pattern qualitatively supports the simulation finding that
Prism-GRPO discourages undesirable task-completion strategies.

\end{document}